\definecolor{csim}{rgb}{0.3,0.7,0}
\newcommand{\R}{\mathbb{R}}
\newcommand{\PP}{\mathbb{P}}
\newcommand{\wt}{\widetilde}
\newcommand{\ol}{\overline}
\newcommand{\field}[1]{\mathbb{#1}}
\newcommand{\E}{\field{E}}
\newcommand{\EXP}{\E}
\newcommand{\PROB}{\field{P}}
\def\IND{\mathbbm{1}}
\newtheorem{theorem}{Theorem}
\newtheorem{lemma}[theorem]{Lemma}
\newtheorem{corollary}[theorem]{Corollary}
\newtheorem{proposition}[theorem]{Proposition}
\newcommand{\qed}{\hfill $\Box$}
\newenvironment{proof}{\par\noindent{\bf Proof:}}{\qed \par\medskip\noindent}
\title{On the quality of randomized approximations of Tukey's depth
    \thanks{Simon Briend acknowledges the support of Région Ile de France.
G\'abor Lugosi acknowledges the support of Ayudas Fundación BBVA a
Proyectos de Investigación Científica 2021 and
the Spanish Ministry of Economy and Competitiveness, Grant
PGC2018-101643-B-I00 and FEDER, EU.}
\author{
  Simon Briend \\
  Unidistance Suisse\\
  3900, Brig, Suisse 
%\thanks{Department of Economics and Business, Pompeu Fabra University, Barcelona, Spain}
\and  
G\'abor Lugosi \\
Department of Economics and Business, \\
Pompeu  Fabra University, Barcelona, Spain \\
ICREA, Pg. Lluís Companys 23, 08010 Barcelona, Spain \\
Barcelona School of Economics
\and
Roberto Imbuzeiro Oliveira \\
IMPA, Rio de Janeiro, RJ, Brazil
}
}
\date{}
\begin{document}

\maketitle

\begin{abstract}
Tukey's depth (or halfspace depth) is a widely used measure of centrality for multivariate data. However, exact computation of Tukey's depth is known to be a hard problem in high dimensions. As a remedy, randomized approximations of Tukey's depth have been proposed. In this paper we explore when such randomized algorithms return a good approximation of Tukey's depth. We study the case when the data are sampled from a log-concave isotropic distribution. We prove that, if one requires that the algorithm runs in polynomial time in the dimension, the randomized algorithm correctly approximates the maximal depth $1/2$ and depths close to zero. On the other hand, for any point of intermediate depth, any good approximation requires exponential complexity.
\end{abstract}

\section{Introduction}
\label{sec:intro}

Ever since Tukey introduced a notion of data depth \cite{10029477185},
it has been an important tool of data analysts to measure centrality of data points
in multivariate data. Apart from Tukey's depth (also called halfspace depth), many other depth measures
have been developed, such as simplical depth \cite{liu1988notion,liu1990notion},
projection depth \cite{Liu92,ZuSe00},
a notion of ``outlyingness'' \cite{Sta81,Don82}, and the zonoid depth \cite{DyMoKo96,koshevoy1997zonoid}.
Each of these notions offer distinct stability and computability properties that make them suitable for different applications \cite{mosler2021choosing}.  For surveys of depth measures and their  applications we refer the reader to \cite{Mos02,Alo06,dyckerhoff2016exact,rousseeuw1999depth,NaScWe19}. 

Tukey's depth is defined as follows: for $x\in \R^d$ and unit vector $u\in S^{d-1}$ (where $S^{d-1}$ is the unit sphere of $\R^d$ under the euclidean norm), introduce the closed halfspace
\[
  H(x,u) = \left\{ y \in \R^d \ : \ \langle y,u\rangle \leq \langle x,u\rangle \right\}~,
\]
where $\langle \cdot,\cdot\rangle$ is the usual scalar product on $\R^d$. Given a set of $n$ data points $\{x_1,\ldots,x_n\}$ in $\R^d$, for each $x\in \R^d$, define the directional depth
\[
  r_n(x,u) = \frac{1}{n}\sum_{i=1}^n \IND_{x_i\in H(x,u)}~.
\]
For any $x\in\R^d$, its depth  in the point set $\{x_1,\ldots,x_n\}$ is defined as
\[
  d_n(x) = \inf_{u\in S^{d-1}} r_n(x,u)~.
\]
Note that, due to the normalization in our definition, $d_n(x)\in [0,1]$ for all $x\in \R^d$. A point that maximizes Tukey's depth is called a Tukey median.
Tukey's depth possesses properties expected of a depth measure. It is affine invariant, it vanishes at infinity, and it is monotone decreasing on rays emanating from the deepest point. It is also robust under a symmetry assumption  \cite{DoGa92}.

A well-known disadvantage of Tukey's depth is that even its
approximate computation is known to be a {\sc np}-hard problem \cite{AmKa95,BrChIaLaMo08,JOHNSON197893}, presenting challenges for applications. While fast algorithms exist for computing the depth of the deepest point in two dimensions \cite{chan2004optimal}, the computational complexity grows exponentially with the dimension. In \cite{chan2004optimal}, a maximum-depth computation algorithm of complexity $\mathcal{O}(n^{d-1})$ is given.

The curse of dimensionality affects several other depth measures,
posing significant challenges in multivariate analysis. To address
these challenges, focus has been put on developing approximation
algorithms. In \cite{MR4207999}, the importance of finding such algorithms is presented and {\sc mcmc} methods are proposed in \cite{SHAO2022114278} for approximating the projection depth. In \cite{Zuo19newapproach}, an approximate version of Tukey's depth is introduced and an algorithm with linear time complexity in the dimension is provided, though the proposed version may be a poor approximation
of Tukey's depth. 

A natural way of approximating Tukey's depth, proposed independently
by Dyckerhoff \cite{Dyc04} and Cuesta-Albertos and Nieto-Reyes \cite{cuesta2008random},
is a randomized version in which the infimum over all possible directions $u\in S^{d-1}$ in the definition of $d_n(x)$ is replaced by the minimum over a number of randomly chosen directions.
More precisely, let $U_1,\ldots,U_k$ be independent identically distributed vectors sampled uniformly on the unit sphere $S^{d-1}$, and define the \emph{random Tukey depth} (with respect to the point set $\{x_1,\ldots,x_n\}$) as
\[
  D_{n,k}(x)  = \min_{i=1,\ldots,k} r_n(x,U_i)~.
\]
It is easy to see that for every $x\in\R^d$, $\lim_{k\to\infty} D_{n,k}(x) = d_n(x)$ with probability $1$. However, this randomized approach is only useful if the number of random directions
$k$ is reasonably small so that computation is feasible. The purpose of this paper is to explore the
tradeoff between computational complexity and accuracy. In particular, we may ask how large $k$ has to be
in order to guarantee that, for given accuracy and confidence parameters $\epsilon \in (0,1/2)$ and $\delta\in (0,1)$,  
$|D_{n,k}(x)-d_n(x)|\le \epsilon$ with probability at least $1-\delta$.

It is easy to see that the value of $k$ required to satisfy the property
above may be arbitrarily large. To see this, fix $n\geq 4$ even and
consider the two-dimensional example in which, for $i=[1,n]$, the
points $x_i=(x_{i,1},x_{i,2})$
are defined by 
\[
   x_{i,1}= \frac{i}{n}~, \qquad x_{i,2}= a\left(\frac{i}{n}\right)^2
\]
where $a>0$ is a parameter. We want to evaluate the depth of $x_{n/2}$ in the point set $S=\{x_1,\cdots,x_{n/2-1},x_{n/2+1},\cdots,x_n \}$.  Since $x_{n/2}$ lies outside of the convex envelope of $S$, its depth with regard to the set $S$ is $0$. However, to "see" this depth we need to evaluate the depth in a direction $u$ such that
$$ \langle x_{n/2-1},u \rangle> \langle x_{n/2},u \rangle~,$$
and
$$\langle x_{n/2+1},u \rangle > \langle x_{n/2},u \rangle~.$$
An illustration, presented in Figure \ref{fig:TukeyCurve}, shows that by choosing $a$ arbitrarily small, the set where $u$ must be sampled to detect the depth of $x_{n/2}$ becomes arbitrarily small, and consequently $k$ must be chosen arbitrarily large to accurately estimate the Tukey depth.

\begin{figure}
\includegraphics[width=8cm]{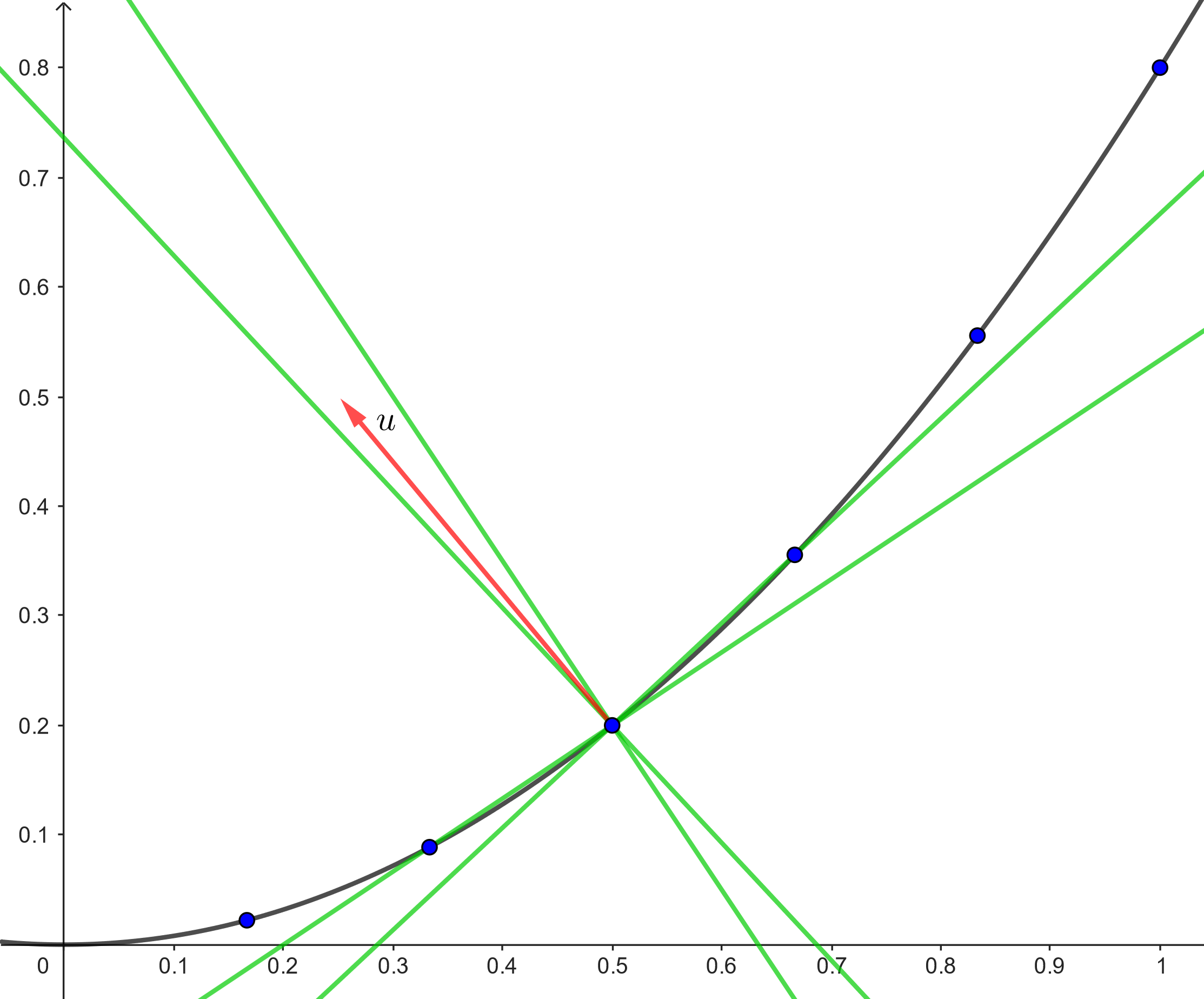}
\centering
\caption{Illustration of the dataset $x_i=\left(i/6,0.8(i/6^2\right)$ for $i\in[6]$. In green, the lines between which vector $u$ must lie to detect that the depth of $x_3$ is $0$.}
\label{fig:TukeyCurve}
\end{figure}

In order to exclude the anomalous behavior of the example above,
we assume that the points $x_i$ are drawn randomly from an isotropic
log-concave distribution $\mu$. Recall that a distribution $\mu$ is log-concave
if it is absolutely continuous with respect to the Lebesgue measure, with density
$f$ of the form $f(x)=e^{-g(x)}$ where $g:\R^d\to \R$ is a convex function.
$\mu$ is isotropic if for a random vector $X$ distributed by $\mu$,
the covariance matrix $\EXP (X-\EXP X)(X-\EXP X)^T$ is the identity matrix. Examples of log-concave
distributions include Gaussian distributions and the uniform distribution on a convex body in $\R^d$.

For random data, one may introduce the ``population'' counterpart of $r_n$ defined by
\[
   \ol{r}(x,u) = \mu(H(x,u))~.
\]
Similarly, the population versions of the Tukey depth and randomized Tukey depth
are defined by
\[
   \ol{d}(x)=  \inf_{u\in S^{d-1}}\ol{r}(x,u)
\quad \mbox{and} \quad
   \ol{D}_k(x)= \min_{i=1,\ldots,k} \ol{r}(x,U_i)~.
 \]
 Remark that if $\mu$ has a density, then Tukey's depth lies between $0$ and $1/2$. As it was observed in \cite{cuesta2008random}  and \cite{ChGaRe18}, as long as $n\gg d$,
 the population versions of the Tukey depth $\ol{d}(x)$ and randomized Tukey depth $\ol{D}_k(x)$
 are good approximations of $d_n(x)$ and $D_{n,k}(x)$, respectively.
This follows from standard uniform convergence results of empirical process theory
based on the {\sc vc} dimension.
The next lemma quantifies this closeness. For completeness we include its proof in the Appendix.

\begin{lemma}
\label{TheEmpGap}
Let $\delta>0$. 
If $X_1,\ldots,X_n$ are independent, identically distributed random
vectors in $\R^d$, then
\[
\PROB\left\{   \sup_{x\in \R^d} |\ol{d}(x) - d_n(x)|\ge c\sqrt{\frac{d\log(n)}{n}} + \sqrt{\frac{\log(1/\delta)}{2n}} \right\} \le \delta
\]
where $c$ is a universal constant.
Also, given any fixed values of $U_1,\ldots,U_k$, 
\[
\PROB\left\{   \left. \sup_{x\in \R^d} |\ol{D}_k(x) - D_{n,k}(x)| \geq c\sqrt{\frac{\log(n)\min(d,\log(k))}{n}} + \sqrt{\frac{\log(1/\delta)}{2n}} \right| U_1,\ldots,U_k \right\} \leq \delta.
\]
\end{lemma}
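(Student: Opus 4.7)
My plan is to reduce both statements to uniform empirical-process bounds over (sub)classes of halfspaces, which can then be controlled by classical VC theory combined with bounded-differences concentration. The starting observation is the ``infimum-Lipschitz'' estimate
\[
|\inf_u f(u)-\inf_u g(u)|\le \sup_u|f(u)-g(u)|
\]
together with its finite analogue $|\min_i a_i-\min_i b_i|\le \max_i|a_i-b_i|$. Writing $\mu_n=\frac{1}{n}\sum_{j=1}^n\delta_{X_j}$ so that $r_n(x,u)=\mu_n(H(x,u))$ and $\ol r(x,u)=\mu(H(x,u))$, these yield
\[
\sup_{x\in\R^d} |\ol d(x)-d_n(x)|\le \sup_{x\in\R^d,\, u\in S^{d-1}}|\mu(H(x,u))-\mu_n(H(x,u))|
\]
and, conditionally on $U_1,\ldots,U_k$,
\[
\sup_{x\in\R^d}|\ol D_k(x)-D_{n,k}(x)|\le \max_{i\le k}\sup_{x\in\R^d}|\mu(H(x,U_i))-\mu_n(H(x,U_i))|.
\]
So both claims reduce to uniform deviation bounds for the empirical measure over particular halfspace classes.

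For the first inequality I would invoke the Vapnik--Chervonenkis inequality applied to the class of all closed halfspaces in $\R^d$, whose VC dimension equals $d+1$. Sauer's lemma bounds the growth function by $(en/(d+1))^{d+1}$, giving $\E[\sup_{x,u}|\mu(H(x,u))-\mu_n(H(x,u))|]\le c\sqrt{d\log(n)/n}$. McDiarmid's bounded-differences inequality (each $X_j$ changes the supremum by at most $1/n$) then concentrates the supremum within $\sqrt{\log(1/\delta)/(2n)}$ of its mean, producing the first claim.

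For the conditional bound, with $U_1,\ldots,U_k$ fixed, I would consider the restricted class $\mathcal{H}_k=\bigcup_{i=1}^k\{H(x,U_i):x\in\R^d\}$. Each of the $k$ subclasses indexed by a fixed direction $U_i$ is parameterized by the single real number $\langle x,U_i\rangle$ and thus has VC dimension $1$, so the growth function of $\mathcal{H}_k$ is at most $k(n+1)$. On the other hand $\mathcal{H}_k$ is contained in the full halfspace class, so its growth function is also at most $(en/(d+1))^{d+1}$. Using whichever bound is smaller in the VC inequality produces the $\min(d,\log k)$ factor, i.e.\ $\E[\sup]\le c\sqrt{\log(n)\min(d,\log k)/n}$, and McDiarmid applied conditionally on the $U_i$'s contributes the $\sqrt{\log(1/\delta)/(2n)}$ deviation term.

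The main obstacle I anticipate is essentially bookkeeping: combining the two regimes of growth-function bounds cleanly into the single expression $\min(d,\log k)$ stated in the lemma, and checking that McDiarmid applies conditionally on the $U_i$'s (which it does, since conditional on the random directions the $X_j$ remain i.i.d.\ with the same $1/n$ bounded-differences constant). Beyond this, no ideas beyond classical VC theory and bounded-differences concentration are required.
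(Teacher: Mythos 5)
Your proposal is correct and follows essentially the same route as the paper: reduce both suprema to uniform deviations of the empirical measure over halfspace classes via the infimum/minimum Lipschitz estimate, then apply the Vapnik--Chervonenkis inequality with VC dimension $d+1$ for the first bound, and exploit the fixed directions (each one-direction subclass having VC dimension $1$, combined with a union bound over the $k$ directions) to obtain the $\min(d,\log k)$ factor in the second. Your explicit split into an expectation bound plus bounded-differences concentration is just a standard way of filling in the VC inequality the paper cites, so there is no substantive difference.
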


Thanks to Lemma \ref{TheEmpGap}, in the rest of the paper we restrict our attention to
the population quantities $\ol{d}(x)$ and $\ol{D}_k(x)$ and we may forget the data points $X_1,\ldots,X_n$.
In particular, we are interested in finding out for what points $x\in \R^d$ and $k\ge 0$ the random Tukey depth $\ol{D}_k(x)$ is a good
approximation of $\ol{d}(x)$. To this end, we fix an accuracy $\epsilon >0$ and a confidence
level $\delta>0$ and ask that
\begin{equation}
\label{eq:goal}
\ol{D}_k(x) - \ol{d}(x)\le \epsilon \quad
\text{holds with probability at least $1-\delta$}.
 \end{equation}
(Note that, by definition, $\ol{D}_k(x) \ge \ol{d}(x)$ for all $x$ and $k$.)
The main results of the paper show an interesting trichotomy: for most ``shallow'' points (i.e., those
with $\ol{d}(x) \le \epsilon$),  we have $\ol{D}_k(x)\le \epsilon$ with probability at least $1-\delta$
even for $k$ of \emph{constant} order, depending only on $\epsilon$ and $\delta$.
When $x$ has near maximal depth in the sense that $\ol{d}(x) \approx 1/2$ (note that such points may
not exist unless the density of $\mu$ is symmetric around $0$), then for values of $k$ that are slightly
larger than a linear function of $d$, \eqref{eq:goal} holds. However, in sharp contrast with this,
for points $x$ of intermediate depth, $k$ needs to be exponentially large in $d$ in order to guarantee
\eqref{eq:goal}. Hence, roughly speaking, the depth of very shallow and very deep points can be
efficiently approximated by the random Tukey depth but for all other points, any reasonable
approximation by the random Tukey depth requires exponential complexity.

\subsection{Related literature}

In \cite{cuesta2008random}, various properties of the random Tukey depth are investigated and  good experimental behavior is reported.
The maximum discrepancy between $d_n$ and its randomized approximation has also been studied in \cite{MR4165498}. They establish conditions under which $\sup_{x\in \R^d} \left(\ol{D}_k(x)- \ol{d}(x)\right) \to 0$
as $k\to \infty$ and provide bounds for the rate of convergence. As opposed to the global view
presented in \cite{MR4165498}, our aim is to identify the points $x$ for which the random Tukey depth approximates
well $\ol{d}(x)$ for values of $k$ that are polynomial in the dimension. 

In \cite{Brazitikos22halfspacedepth}, it is shown that the average depth $\int _{\R^d}\ol{d}(x) d\mu(x)$ is exponentially
small in the dimension when $\mu$ is log-concave. In \cite{ilienko2025strong}, the set of ``deep points'' for a mesure and its weighted empirical counterpart are compared. On a similar line of questions, \cite{brunel2019concentration} studies convergence of the empirical level sets 
when the data points are drawn independently from the same distribution, and \cite{ChMoWa13} studies the quality of other randomized approximations of the Tukey depth
for point sets in general position.

%For each direction we will analyse the random variable $\langle X,u\rangle$. It has variance $1$ and we know from \cite{10.1214/18-STS666} that it is log-concave. Thus, it has a density function $f_u$, a cumulative distribution function $\phi_u$ and a median $m_u$.
 
\subsection{Contributions and outline}

As mentioned above, the main results of this paper show that, for isotropic log-concave
distributions, the quality of approximation of the random Tukey depth varies dramatically,
depending on the depth of the point $x$. 

\subsubsection*{Most points have a small random Tukey depth}
In Section \ref{sec:shallow} we establish results related to shallow points. 
It follows from \cite[Theorem 1.1]{Brazitikos22halfspacedepth}
(together with the solution of Bourgain's slicing problem by \cite{KlLe24})
and Markov's inequality
that all but an exponentially small fraction of points are shallow in the sense that, for all $\epsilon>0$,
\[
   \mu\left( \left\{x\in \R^d: \ol{d}(x) > \epsilon \right\} \right) \le \frac{e^{-cd}}{\epsilon}~,
 \]
 where $c>0$ is a universal constant. The main result of Section \ref{sec:shallow} is that, in high dimensions,
 not only most points are shallow but most points even have a small random Tukey depth for $k$
 of \emph{constant} order, only depending on the desired accuracy. In particular, Theorem \ref{thm:shallow}  leads to the following.

\begin{tcolorbox}
\begin{corollary}
\label{cor:shallow}
Assume that $\mu$ is an isotropic log-concave measure on $\R^d$.
There exist universal constants $c,\kappa,C>0$ such that for any $\epsilon,\delta,\gamma >0$, if
 \[
   k = \left\lceil
     \max\left(C,\frac{4}{\epsilon}\log\frac{3}{\gamma},\frac{2}{c}\log\frac{4}{\delta} \right)  \right\rceil~, 
   \]
   and the dimension $d$ is so large that
   \[
     d \ge \max\left(\left(\frac{3(k+1)}{\gamma}\right)^{1/\kappa}, \frac{64 k \log(1/\epsilon)^2}{\pi}\log\frac{3k}{\gamma},
       \left(\frac{1}{c}\log\frac{6k}{\delta}\right)^2, \left(\frac{2}{\epsilon}\right)^{\kappa} \right)~,
   \]
   then, with probability at least $1-\delta$,
\[
    \mu\left(\left\{x\in \R^d: \ol{D}_k(x) > \epsilon\right\} \right) < \gamma~.
\]
\end{corollary}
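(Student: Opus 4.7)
The plan is to reduce the claim to a pointwise lower bound on the probability, over a single random direction $U$, that $\ol{r}(X,U)\le\epsilon$ when $X\sim\mu$, and then to apply Fubini and Markov's inequality. Writing $X\sim\mu$ independently of $U_1,\dots,U_k$,
\[
\EXP_{U_1,\dots,U_k}\,\mu\left\{x\in\R^d:\ol{D}_k(x)>\epsilon\right\}\;=\;\EXP_X\!\left[\Pr_U\{\ol{r}(X,U)>\epsilon\}^k\right].
\]
If I can establish that, for $X$ in a subset $A\subset\R^d$ of $\mu$-measure at least $1-\gamma\delta/6$, the inner probability obeys $\Pr_U\{\ol{r}(X,U)\le\epsilon\}\ge\epsilon/2$, then its $k$-th complement is at most $\exp(-k\epsilon/2)$, which drops below $\gamma\delta/6$ as soon as $k\ge(4/\epsilon)\log(3/(\gamma\delta))$; a final Markov step then converts the expected-measure bound into the desired probability statement.

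The core pointwise estimate rests on rewriting $\ol{r}(X,U)=F_U(\langle X,U\rangle)$, where $F_U$ is the CDF of $\langle Y,U\rangle$ with $Y\sim\mu$, and showing that this quantity is approximately uniform on $[0,1]$ over random $U$ for typical $X$. Two ingredients combine. First, thin-shell concentration for isotropic log-concave measures gives $\|X\|=\sqrt{d}\,(1+o(1))$ on a set $A$ of $\mu$-measure at least $1-e^{-c\sqrt d}$, and the constraint $d\ge((1/c)\log(6k/\delta))^2$ drives this exceptional mass below $\gamma\delta/6$. Second, Klartag's quantitative central limit theorem for log-concave measures provides $\sup_t|F_U(t)-\Phi(t)|\le d^{-\kappa}$ on a set of directions of Haar mass at least $1-d^{-\kappa}$, where $\Phi$ is the standard normal CDF. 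For $x\in A$ and $U$ in the Klartag-good set, the event $\ol{r}(x,U)\le\epsilon$ becomes $\langle x,U\rangle\le F_U^{-1}(\epsilon)$, and $F_U^{-1}(\epsilon)$ differs from $\Phi^{-1}(\epsilon)$ by $O(d^{-\kappa})$; by a sub-Gaussian lower-tail estimate for the spherical marginal $\|x\|U_1$, this event has probability at least $\epsilon/2$. The constraints $d\ge(3(k+1)/\gamma)^{1/\kappa}$ and $d\ge(2/\epsilon)^\kappa$ then handle, respectively, the Haar mass of Klartag-bad $U$'s and the requirement that the CDF error not swamp $\epsilon$, while $d\ge(64k\log(1/\epsilon)^2/\pi)\log(3k/\gamma)$ is what is needed so that the one-dimensional spherical marginal $U_1$ retains its Gaussian-like lower tail at the threshold $-\sqrt{2\log(1/\epsilon)/d}$.

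Assembling the pieces, for $x\in A$ one obtains $\Pr_U\{\ol{r}(x,U)>\epsilon\}^k\le(1-\epsilon/2)^k\le\gamma\delta/6$ by the choices of $k$ and $d$; adding $\mu(A^c)\le\gamma\delta/6$ and using Fubini yields $\EXP\,\mu\{x:\ol{D}_k(x)>\epsilon\}\le\gamma\delta/3$, and Markov's inequality delivers the claimed probability bound. The main obstacle throughout is the tail control in the second step: since $\epsilon$ can be small, $\Phi^{-1}(\epsilon)$ sits deep in the Gaussian tail, so one needs Gaussian approximations of both $F_U$ and of the spherical marginal $U_1$ to be accurate not in bulk but at thresholds of order $\sqrt{\log(1/\epsilon)}$. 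The delicate coupling of Klartag's CLT, thin-shell concentration and sub-Gaussian tail estimates for the sphere is where most of the quantitative bookkeeping, and the somewhat baroque dimension hypothesis in the statement, originates.
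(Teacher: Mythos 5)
Your Fubini identity is correct, and your route --- reduce to a single-direction estimate $\Pr_U\{\ol{r}(x,U)\le\epsilon\}\ge c\,\epsilon$ for $\mu$-typical $x$, then convert an expected-measure bound into a probability bound by Markov --- is genuinely different from the paper's. The paper never averages over $x$ against the direction randomness: it conditions on high-probability events for $U_1,\dots,U_k$ (one-dimensional quantile control $q(\epsilon,U_i)\ge\Phi^{-1}(\epsilon/2)$ for each $U_i$, near-orthogonality after Gram--Schmidt, and Klartag's $k$-dimensional CLT applied to the spanned subspace) and bounds the measure $\mu\{x:\ol{D}_k(x)>\epsilon\}$ deterministically on that event. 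This difference is not cosmetic; it is precisely what makes the stated constants achievable, and it is where your proposal has a genuine gap.

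The gap is in the Markov step. To conclude that $\mu\{x:\ol{D}_k(x)>\epsilon\}<\gamma$ with probability $1-\delta$ via Markov, you need the expected measure to be at most $\gamma\delta$, hence $(1-c\epsilon)^k\lesssim\gamma\delta$, i.e. $k\gtrsim\frac{1}{\epsilon}\log\frac{1}{\gamma\delta}$, and likewise the exceptional set (thin-shell failures, Klartag-bad directions) must have mass $\lesssim\gamma\delta$. But the corollary's $k$ carries the $\delta$-dependence only additively and without a $1/\epsilon$ factor (the term $\frac{2}{c}\log\frac{4}{\delta}$), and its dimension hypothesis controls $e^{-c\sqrt d}$ only at level $\delta/(6k)$, not $\gamma\delta$. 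So in the regime $\delta\ll\gamma$ (say $\delta=e^{-1000}$, $\gamma=0.1$, $\epsilon=10^{-3}$) your bookkeeping fails: with the prescribed $k$ one only gets $e^{-k\epsilon/2}\approx e^{-7}$, far above $\gamma\delta/6$, and $e^{-c\sqrt d}\le\delta/(6k)$ does not give $\mu(A^c)\le\gamma\delta/6$ when $\gamma<1/k$. In the paper's argument, $\delta$ enters only through the failure probabilities of the direction events ($ke^{-ck}$ and $(k+1)e^{-c\sqrt d}$), which are independent of $\epsilon$ and $\gamma$; that decoupling cannot be reproduced by a plain expectation-plus-Markov argument, so your proof establishes a variant of the corollary with a strictly stronger requirement on $k$ and $d$, not the statement as given. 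Secondary, fixable issues: the pointwise bound $\Pr_U\{\ol{r}(x,U)\le\epsilon\}\ge\epsilon/2$ should be weakened to $\ge c\epsilon$ (the spherical-cap lower bound at height of order $\sqrt{\log(1/\epsilon)/d}$ and the thin-shell accuracy $\|x\|\ge\sqrt d\,(1-O(1/\log(1/\epsilon)))$ you need only yield a universal constant), and in Klartag's theorem the good-direction mass is $1-e^{-c\sqrt d}$ while $d^{-\kappa}$ is the total-variation error, not the Haar mass of bad directions.
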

\end{tcolorbox}

Of course, $\ol{D}_k(x) \le \epsilon$ implies that $\ol{d}(x) \le \epsilon$ and, in particular,
that $\ol{D}_k(x) -\ol{d}(x) \le \epsilon$. Thus, Corollary \ref{cor:shallow} implies that the random Tukey depth 
of \emph{most} points (in terms of the measure $\mu$) is a good approximation of the Tukey depth
after taking just a constant number of random directions. All of these points are shallow in the sense
that $\ol{d}(x) \le \epsilon$. 

It is natural to ask whether the Tukey depth of every shallow point is well approximated by 
its random version. However, this is false as the following example shows.

\medskip
\noindent
{\bf Example.}
Let $\mu$ be the uniform distribution on $[-\sqrt{3},\sqrt{3}]^d$ so that $\mu$ is
isotropic and log-concave on $\R^d$. If $x=(\sqrt{3},0,\ldots,0)$, then $\ol{d}(x)=0$, but
it is a simple exercise to show that $\ol{D}_k(x) \ge 1/4$ with high probability, unless $k$
is exponentially large in $d$.

\subsubsection*{Intermediate depth is hard to approximate}

Arguably the most interesting points are those whose depth is in the intermediate range,
bounded away from $0$ and $1/2$. Unfortunately, for all such points, the random Tukey depth
is an inefficient approximation of the Tukey depth.
In Section \ref{sec:intermediate} we show that
for all points in this range, the random Tukey depth $\ol{D}_k(x)$ is close to $1/2$, with high probability,
unless $k$ is exponentially large in the dimension. Hence, in high dimensions,
$\ol{D}_k(x)$ fails to efficiently approximate the true depth $\ol{d}(x)$.
In particular, Theorem \ref{thm:intermediate} leads to the following.

\begin{tcolorbox}
\begin{corollary}
\label{cor:intermediate}
Assume that $\mu$ is an isotropic log-concave measure on $\R^d$ and let $\delta \in (0,1)$.
For any $\gamma \in (0,1/2)$, there exists a positive constant $c=c(\gamma)$ such that
if $x\in\R^d$ is such that $\ol{d}(x)=\gamma$, then
for every $\epsilon < c$, if $k \le (\delta/2) e^{d\epsilon^2 /2}$, then, with probability
at least $1-\delta$,
\[
     \ol{D}_k(x) - \ol{d}(x) \ge \epsilon~.
\]
\end{corollary}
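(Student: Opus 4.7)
The plan is to reduce, via a union bound, to a tail estimate for $\ol{r}(x, U)$ at a single uniformly random direction $U \in S^{d-1}$, and then control that tail using Lévy's concentration-of-measure inequality on the sphere.

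First I would write $\ol{D}_k(x) - \ol{d}(x) < \epsilon$ as $\min_{1 \le i \le k} \ol{r}(x, U_i) < \gamma + \epsilon$; the union bound then yields
\[
  \PROB\bigl(\ol{D}_k(x) - \ol{d}(x) < \epsilon\bigr) \le k \cdot \PROB\bigl(\ol{r}(x, U) < \gamma + \epsilon\bigr).
\]
Given the hypothesis $k \le (\delta/2) e^{d\epsilon^2/2}$, the corollary reduces to proving the single-direction estimate
\[
  \PROB\bigl(\ol{r}(x, U) < \gamma + \epsilon\bigr) \le 2 e^{-d\epsilon^2/2}.
\]

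Next I would analyse the function $f: S^{d-1} \to [0, 1]$ defined by $f(u) = \ol{r}(x, u)$. The central observation is that $1/2$ is a median of $f(U)$: since $U$ and $-U$ have the same uniform distribution on $S^{d-1}$, and $\ol{r}(x, -u) = 1 - \ol{r}(x, u)$ (because $\mu$ has a density, so $\mu(\partial H(x, u)) = 0$), the distribution of $f(U)$ is symmetric about $1/2$. Granting further that $f$ is Lipschitz on $S^{d-1}$ with constant $L = L(\gamma)$ depending only on $\gamma$, Lévy's inequality gives
\[
  \PROB\bigl(f(U) \le 1/2 - t\bigr) \le \exp\!\left(-\frac{(d-1)\, t^2}{2 L^2}\right),
\]
and the choice $t = 1/2 - \gamma - \epsilon$ (positive for $\epsilon < 1/2 - \gamma$) produces
\[
  \PROB\bigl(\ol{r}(x, U) < \gamma + \epsilon\bigr) \le \exp\!\left(-\frac{(d-1)(1/2 - \gamma - \epsilon)^2}{2 L(\gamma)^2}\right).
\]
Setting $c(\gamma) := (1/2 - \gamma)/(2(L(\gamma) + 1))$ then ensures that for all $\epsilon < c(\gamma)$ the right-hand side is at most $2 e^{-d \epsilon^2/2}$, yielding the desired tail bound.

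The hard part will be the Lipschitz estimate $L(\gamma)$. My plan is to bound $|f(u) - f(u')| \le \mu(H(x, u) \triangle H(x, u'))$ by projecting onto the two-dimensional plane spanned by $u$ and $u'$: the pushforward measure is isotropic log-concave on $\R^2$, and the symmetric difference of halfspaces projects to a planar wedge of angular width $\angle(u, u')$ with apex at the projection of $x$. Universal upper bounds on two-dimensional isotropic log-concave densities combined with Borell-type exponential tails then give a measure for this wedge of order $C(1 + \|x\|^2) \cdot \angle(u, u')$. To control $\|x\|$, I would translate so that $\E X = 0$ and apply the definition of $\ol{d}(x)$ in the direction $u = x/\|x\|$: subexponential tails of one-dimensional marginals of isotropic log-concave measures give $\ol{d}(x) \le e^{-c \|x\|}$, so $\|x\| \le C' \log(1/\gamma)$. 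Consequently $L$ depends only on $\gamma$, closing the argument.
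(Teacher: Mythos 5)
Your proposal is correct in substance and follows essentially the same route as the paper's proof of Theorem \ref{thm:intermediate}: union bound over the $k$ directions, L\'evy's isoperimetric inequality on $S^{d-1}$ applied to $u\mapsto \mu(H(x,u))$ around its median $1/2$, regularity of this map obtained by projecting to a two-dimensional plane and bounding the measure of a wedge with apex near $x$, and control of $\|x\|$ (of order $\log(1/\gamma)$) from $\ol{d}(x)=\gamma$ via exponential tails of one-dimensional marginals. The only point to flag is your claimed modulus $C(1+\|x\|^2)\,\|u-u'\|$: with the tools you name (a uniform bound on the two-dimensional density plus exponential decay of halfspace/tail \emph{measures}), the ball-plus-halfplane decomposition of the wedge only yields $C_\gamma\,\|u-u'\|\log^2(1/\|u-u'\|)$, which is exactly the paper's modulus and is not Lipschitz; a genuine Lipschitz bound would require pointwise exponential decay of the projected density with universal constants (true in fixed dimension, but needing its own justification). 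Either way the conclusion is unaffected — with the log-corrected modulus one simply chooses $c(\gamma)$ so that $\gamma+\epsilon\le 1/2 - C_\gamma\,\epsilon\log^2(1/\epsilon)$ for $\epsilon<c(\gamma)$, as the paper does, and your union-bound step then gives the corollary (after absorbing the harmless $d$ versus $d-1$ discrepancy in the exponent into $c(\gamma)$).
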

\end{tcolorbox}

\subsubsection*{Points of maximum depth are easy to localize}

As mentioned above, if $\mu$ has no atoms,  the Tukey depth $\ol{d}(x)$ of any $x\in \R^d$ is at most $1/2$.
If $\ol{d}(x)=1/2$, then for every $u\in S^{d-1}$, the median of the projection $\langle X, u \rangle$
equals $\langle x, u \rangle$ (where the random vector $X$ is distributed as $\mu$).
Such points are quite special and may not exist at all. If there exists an $x\in \R^d$ with $\ol{d}(x)=1/2$,
then the measure $\mu$ is called \emph{halfspace symmetric} (see
\cite{ZuSe00a,NaScWe19}).
It is easy to see that if $\mu$ is halfspace symmetric and has no atoms, there is a
unique $m\in \R^d$ with $\ol{d}(m)=1/2$. Then $m$ is the unique \emph{Tukey
  median} of $\mu$.
Centrally symmetric measures are halfspace symmetric though the converse does not hold in general.
Remarkably, if $\mu$ is the uniform distribution over a convex body and it is halfspace symmetric,
then it is also centrally symmetric, see  Funk \cite{Fun15},
Schneider \cite{Sch70},  and 
Rousseeuw and Struyf \cite{MR2057920}, who independently proved a more
general result, stating that an atomless distribution is halfspace symmetric if and only if it is angularly symmetric.
See \cite{NaScWe19} for further discussion.

We note that for any log-concave measure, $1/e \le \sup_{x\in \R^d} \ol{d}(x) \le 1/2$, see
\cite[Theorem 3]{NaScWe19}.

If $m\in \R^d$ is such that $\ol{d}(m)=1/2$, then clearly $\ol{D}_k(m)=1/2$ for all $k\ge 1$.
In Section \ref{sec:deep} we show that, for values of $k$ that are only polynomial in $d$, points
with $\ol{D}_k(x) \approx 1/2$ must be close to $m$. Hence, selecting points of high random Tukey depth
efficiently estimates the Tukey median for halfspace symmetric isotropic log-concave distributions.
More precisely, Theorem \ref{thm:deep}, combined with Lemma \ref{TheEmpGap} leads to the following.

\begin{tcolorbox}
\begin{corollary}
\label{cor:deep}
Assume that $\mu$ is an isotropic log-concave, halfspace symmetric measure on $\R^d$ and let $m$ be its unique Tukey median.
Let $X_1,\ldots,X_n$ be independent random vectors distributed as
$\mu$. Let $m_{n,k}\in \R^d$ be an empirical random Tukey median, that is, $m_{n,k}$
is such that $D_{n,k}(m_{n,k}) = \max_{x\in R^d} D_{n,k}(x)$.
There exist
universal constants $c,C>0$ such that for any $\delta \in (0,1)$ and
$\gamma \in (0,c)$,
if $n \ge C d/\gamma^2$ and
\[
  k \geq
  c\left(d\log d+\log(1/\delta)\right)~,
\]
then $\|m_{n,k}-m\| \le C\gamma\sqrt{d}$  with probability at least $1-\delta$.
\end{corollary}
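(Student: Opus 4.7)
\begin{proofof}{Corollary \ref{cor:deep} (sketch)}
The plan is to chain together three facts: (i) the empirical random Tukey depth is uniformly close to its population counterpart, by Lemma \ref{TheEmpGap}; (ii) the halfspace symmetry of $\mu$ pins the population random Tukey depth at $m$ to be exactly $1/2$; and (iii) Theorem \ref{thm:deep} (applied at the population level) says that for $k \gtrsim d\log d + \log(1/\delta)$, with probability at least $1-\delta/2$, \emph{any} point $x$ whose population random Tukey depth is close to $1/2$ must lie in a small Euclidean neighborhood of $m$.

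In more detail, first I invoke Lemma \ref{TheEmpGap} with confidence $\delta/2$. Because $n \ge Cd/\gamma^2$ and $k$ is polylogarithmic in $d$ (so that $\min(d,\log k)\le \log k \lesssim \log d$), the lemma gives, with probability at least $1-\delta/2$,
\[
\sup_{x \in \R^d} \left| \ol{D}_k(x) - D_{n,k}(x) \right| \le \eta,
\]
where $\eta = O\!\left(\gamma \sqrt{\log(n)\log(d)/d}\right) + O\!\left(\gamma\sqrt{\log(1/\delta)/d}\right)$, and by choosing the constant $C$ in $n\ge Cd/\gamma^2$ large enough we can make $\eta \le \gamma/4$. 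Next, because $\mu$ is halfspace symmetric and log-concave (hence atomless), the median $m$ satisfies $\ol{r}(m,u)=1/2$ for \emph{every} unit vector $u$, and therefore $\ol{D}_k(m) = 1/2$ deterministically, for every realization of $U_1,\ldots,U_k$. Combining these two observations with the definition of $m_{n,k}$ as a maximizer of $D_{n,k}$,
\[
D_{n,k}(m_{n,k}) \ge D_{n,k}(m) \ge \ol{D}_k(m) - \eta = \tfrac12 - \eta,
\]
and applying the uniform closeness once more in the reverse direction,
\[
\ol{D}_k(m_{n,k}) \ge D_{n,k}(m_{n,k}) - \eta \ge \tfrac12 - 2\eta \ge \tfrac12 - \tfrac{\gamma}{2}.
\]

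It remains to apply Theorem \ref{thm:deep} with accuracy parameter $\gamma/2$ and confidence $\delta/2$. Under the hypothesis $k \ge c\,(d\log d + \log(1/\delta))$, Theorem \ref{thm:deep} asserts that, with probability at least $1-\delta/2$, every $x$ satisfying $\ol{D}_k(x) \ge 1/2 - \gamma/2$ also satisfies $\|x-m\| \le C' \gamma \sqrt{d}$. Taking a union bound over the two bad events (of total probability at most $\delta$), we conclude $\|m_{n,k}-m\| \le C \gamma \sqrt{d}$, as desired.

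The only delicate point is matching parameters: one must verify that the choice $n \ge Cd/\gamma^2$ together with $k \lesssim d \log d + \log(1/\delta)$ actually drives the empirical gap $\eta$ in Lemma \ref{TheEmpGap} below $\gamma/4$, since a naive substitution picks up an extra $\sqrt{\log n\, \log d}$ factor that must be absorbed into the universal constant $C$. The condition $\gamma < c$ for a small $c>0$ enters through Theorem \ref{thm:deep}, where the qualitative ``$\ol{D}_k(x)\approx 1/2$ forces $x\approx m$'' statement only holds in a sufficiently small neighborhood of $1/2$; once $\gamma$ is below this neighborhood, the chain of inequalities above runs through cleanly.
\end{proofof}
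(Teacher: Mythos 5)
Your proposal follows essentially the same route the paper intends: Corollary \ref{cor:deep} is obtained there precisely by combining Lemma \ref{TheEmpGap} (uniform closeness of $D_{n,k}$ and $\ol{D}_k$), the identity $\ol{D}_k(m)=1/2$ from halfspace symmetry, the maximizing property of $m_{n,k}$, and Theorem \ref{thm:deep} applied at level $\gamma/2$ with $r$ of order $\gamma\sqrt{d}$, exactly as you do. The one caveat is your remark that the extra $\sqrt{\log n\,\log d}$ factor can be ``absorbed into the universal constant $C$'' --- a factor growing with $n$ and $d$ cannot literally be absorbed into a constant, so strictly one needs $n$ larger than $Cd/\gamma^2$ by such a logarithmic factor; this imprecision is, however, already present in the paper's own statement and does not affect the structure of the argument.
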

\end{tcolorbox}

By taking $\gamma$ of the order of $1/\sqrt{d}$,
the corollary above shows that, as long as $n\gg d^2$, it suffices to take $O(d\log d)$ random
directions so that the empirical random Tukey median is within
distance of constant order of the Tukey median.
Note that, due to the ``thin-shell'' property of log-concave measures (see,
e.g., \cite{ElLe14}), the measure $\mu$ is concentrated around a
sphere of radius $\sqrt{d}$ centered at the Tukey median $m$ and hence
localizing $m$ to within a constant distance is a nontrivial
estimate.

One may even take $\gamma$ to be smaller order than $1/\sqrt{d}$
and get a better precision with the same value of $k$. However, for
better precision, one requires the sample size $n$ to be larger.

At first glance, the statements of Theorems \ref{thm:intermediate}
and \ref{thm:deep} might seem contradictory. Indeed, the former states
that the random Tukey depth of points of intermediate depth
concentrates around $1/2$ while the latter states that, in the halfspace symmetric
case, it is possible to localize the Tukey median using the points of
high random Tukey depth. Theorem \ref{thm:intermediate}
states that the random Tukey depth is larger than $1/2-c$ for a
(potentially small) constant $c$. In the meantime, meaningful
information is extracted from Theorem \ref{thm:deep} when points of
depth at least $1/2-1/\sqrt{d}$ are considered. This suggests that
even though random Tukey depth of a point of intermediate depth concentrates around $1/2$, it is still
below $1/2-1/\sqrt{d}$ and this allows localization of the Tukey median.

One may wonder whether the Tukey median can be localized by
  maximizing the random Tukey depth for general (not only halfspace
  symmetric) log-concave distributions. While we cannot rule out this
  property, its proof would require new ideas as by Theorem \ref{thm:intermediate},
  even if the Tukey median has depth smaller than $1/2$,
  intermediate points have a random Tukey depth concentrating around
  $1/2$.

\section{Random Tukey depth of typical points}
\label{sec:shallow}

In this section we show that for isotropic log-concave distributions, in high dimensions,
a constant number $k$ of random directions suffice to make the random Tukey depth $\ol{D}_k$
small for most points.
In other words, the curse of dimensionality is avoided in a
strong sense.
In particular, we prove the following theorem that implies Corollary \ref{cor:shallow}
in a straightforward manner.

  \begin{theorem}
    \label{thm:shallow}
Assume that $\mu$ is an isotropic log-concave measure on $\R^d$.
There exist universal constants $c,\kappa>0$ such that the following holds.
Let $\epsilon>0$ and suppose that $d$ is so large that $d^{-\kappa} \le \epsilon/2$.
Then for every $k\le cd^{\kappa}$,
\[
\mu\left(\{x\in \R^d: \ol{D}_k(x) > \epsilon\}\right) \le (1-\epsilon/4)^k + (k+1)d^{-\kappa} + ke^{\frac{-d\pi}{64 k \log(1/\epsilon)^2}}
\]
with probability at least $1-ke^{-ck} -(k+1)e^{-c\sqrt{d}}$ over the choice of directions $U_1, \dots, U_k$.
\end{theorem}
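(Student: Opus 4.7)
The first step is to rewrite the bad event as an intersection of half-spaces. Since $\mu$ is log-concave, each one-dimensional marginal $F_{U_i}(t) = \mu\{y:\langle y, U_i\rangle\le t\}$ is continuous and strictly increasing, so setting $t_i = F_{U_i}^{-1}(\epsilon)$ one has
\[
\{x\in\R^d: \ol{D}_k(x) > \epsilon\} = \bigcap_{i=1}^k H_i,\qquad H_i := \{x : \langle x, U_i\rangle > t_i\},
\]
with $\mu(H_i) = 1 - \epsilon$ for every $i$. The goal is thus to upper bound $\mu(\bigcap_i H_i)$.

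The second step is to pass to a Gaussian model via two applications of Klartag's central limit theorem for marginals of isotropic log-concave measures, with $\Phi$ denoting the standard Gaussian CDF and $\gamma_k$ the standard Gaussian measure on $\R^k$. A one-dimensional application union-bounded over $i = 1, \ldots, k$ gives, with probability at least $1 - k e^{-c\sqrt d}$, $|F_{U_i}(t) - \Phi(t)| \le d^{-\kappa}$ uniformly in $t$; hence $|\Phi(t_i) - \epsilon|\le d^{-\kappa}$ and $1 - \Phi(t_i) \le 1 - \epsilon/2$ under the assumption $d^{-\kappa}\le\epsilon/2$. Log-concavity of the marginal together with $\Var\langle X, U_i\rangle = 1$ also yields the deterministic quantile bound $|t_i| \le C\log(1/\epsilon)$. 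A $k$-dimensional application of Klartag to the subspace $E := \mathrm{span}(U_1,\ldots,U_k)$---uniformly distributed on the Grassmannian by rotational symmetry---gives, with probability at least $1 - e^{-c\sqrt d}$, the bound $\|\pi_E(\mu) - \gamma_k\|_{\mathrm{TV}} \le d^{-\kappa}$ for the pushforward of $\mu$ under orthogonal projection onto $E$. Chaining,
\[
\mu\Big(\bigcap_i H_i\Big) \le \gamma_k\Big(\bigcap_i (H_i\cap E)\Big) + d^{-\kappa},
\]
where each Gaussian half-space on $E\cong\R^k$ has measure at most $1 - \epsilon/2$.

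The third step is the Gaussian intersection bound. A sphere-concentration argument gives, with probability at least $1 - k^2 e^{-d\eta^2/2}$, that $\max_{i\ne j}|\langle U_i, U_j\rangle| \le \eta$; choosing $\eta^2 = \pi/(32 k\log(1/\epsilon)^2)$ produces the $k e^{-d\pi/(64 k\log(1/\epsilon)^2)}$ term in the final measure bound, and further concentration estimates guaranteeing well-conditioning of the Gram matrix of $U_1,\ldots,U_k$ contribute the $k e^{-ck}$ failure term. Under these conditions, Gram--Schmidt orthogonalizing $U_1,\ldots,U_k$ into $V_1,\ldots,V_k$ and setting $\xi_j = \langle Z, V_j\rangle$ for $Z\sim\gamma_k$ produces i.i.d.\ $N(0,1)$ variables with $\langle Z, U_i\rangle = c_{ii}\xi_i + \sum_{j<i} c_{ij}\xi_j$ (writing $c_{ij} := \langle U_i, V_j\rangle$), where $c_{ii}$ is close to $1$ and the off-diagonal $c_{ij}$ are small. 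Sequentially conditioning on $\xi_1,\ldots,\xi_{i-1}$, the event $\langle Z, U_i\rangle > t_i$ becomes $\xi_i > (t_i - \sum_{j<i} c_{ij}\xi_j)/c_{ii}$; the Gaussian perturbation $\sum_{j<i} c_{ij}\xi_j$ has variance at most $k\eta^2$ and is small compared to $|t_i|\le C\log(1/\epsilon)$ outside an event absorbed into the tail term. Each conditional factor is then at most $1 - \epsilon/4$, and iterating gives
\[
\gamma_k\Big(\bigcap_i (H_i\cap E)\Big) \le (1 - \epsilon/4)^k + k e^{-d\pi/(64 k\log(1/\epsilon)^2)}.
\]
Assembling the three error sources and union-bounding over failure events produces the stated bound.

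The main obstacle is this last Gaussian intersection step. The usual Gaussian comparison tools (Slepian's inequality, Royen's Gaussian correlation inequality) give lower, not upper, bounds on $\gamma_k(\bigcap_i H_i)$ under positive off-diagonal covariances. The sequential-conditioning plan above bypasses this by absorbing all dependence into small additive perturbations of the one-dimensional constraints $\xi_i > t_i$, but it requires a delicate balance between (i) the orthogonality needed to make each perturbation negligible compared to $|t_i|$ and (ii) the spherical-concentration cost of enforcing that orthogonality, a trade-off that precisely dictates the choice of $\eta$ and hence the explicit exponent $-d\pi/(64k\log(1/\epsilon)^2)$ appearing in the theorem.
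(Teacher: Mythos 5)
Your outline coincides with the paper's proof (one-dimensional Klartag to replace the quantiles $t_i$ by Gaussian quantiles, a $k$-dimensional application of Klartag to the span of $U_1,\dots,U_k$, then Gram--Schmidt and absorption of the non-orthogonality into a small perturbation of the thresholds), but the quantitative heart of your third step fails as you set it up. In the theorem, the term $ke^{-d\pi/(64k\log^2(1/\epsilon))}$ is a bound on a \emph{measure} (Gaussian, after Klartag): it is the tail probability that the perturbation $\sum_{j<i}c_{ij}\xi_j$ (equivalently $\langle x,U_i-V_i\rangle$) exceeds the quantile gap $\Phi^{-1}(\epsilon/2)-\Phi^{-1}(\epsilon/4)\ge \sqrt{2\pi}/(4\log(1/\epsilon))$, \emph{given} that the perturbation variance $\sum_{j<i}c_{ij}^2$ is of order $k/d$; the near-orthogonality needed for that is only at scale $\sqrt{k/d}$, and its failure probability, via the $\chi^2$-ratio bound of Lemma \ref{orth}, is the $ke^{-ck}$ appearing in the clause ``with probability at least $1-ke^{-ck}-(k+1)e^{-c\sqrt d}$ over the directions.'' You instead tuned the orthogonality scale to $\eta^2=\pi/(32k\log^2(1/\epsilon))$ so that the \emph{sphere-concentration failure probability} $k^2e^{-d\eta^2/2}$ numerically matches the target exponent, and you propose to charge that failure to the measure bound. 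That is a category error: it is a probability over the random directions, not a $\mu$- or $\gamma_k$-measure, and the stated direction-probability clause has no room for it. Worse, with your $\eta$ the perturbation has variance about $k\eta^2=\pi/(32\log^2(1/\epsilon))$, i.e.\ standard deviation roughly half of the available slack $\sqrt{2\pi}/(4\log(1/\epsilon))$, so the per-coordinate ``bad perturbation'' event has probability about $e^{-2}$, a constant. Your sequential conditioning then yields at best $(1-\epsilon/4)^k+O(ke^{-2})$, which is vacuous for moderate $k$ and is not the claimed $(1-\epsilon/4)^k+ke^{-d\pi/(64k\log^2(1/\epsilon))}$.

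A related error is the yardstick you use for ``small'': you require the perturbation to be small compared with $|t_i|\le C\log(1/\epsilon)$, but to conclude that each conditional factor is at most $1-\epsilon/4$ you must keep the perturbed threshold above $\Phi^{-1}(\epsilon/4)$, i.e.\ the perturbation must not exceed the gap $\Phi^{-1}(\epsilon/2)-\Phi^{-1}(\epsilon/4)$, which is of order $1/\log(1/\epsilon)$ (Lemma \ref{lem:PhiControl}) --- much smaller than $|t_i|$. The repair is exactly the paper's bookkeeping: control $\|U_i-V_i\|\le\sqrt{4k/d}$ (hence $\sum_{j<i}c_{ij}^2\lesssim k/d$) with probability $1-ke^{-ck}$ over the directions, and then the Gaussian tail $\PROB\left\{N>\frac{\sqrt{2\pi}}{4\log(1/\epsilon)}\sqrt{d/(4k)}\right\}\le e^{-d\pi/(64k\log^2(1/\epsilon))}$ supplies the last term of the measure bound (in the paper this is done by a second one-dimensional application of Klartag to the directions $(U_i-V_i)/\|U_i-V_i\|$, which are again uniform on the sphere). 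With these corrections your sequential-conditioning variant becomes essentially the paper's argument.
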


Our main tool is the following extension of Klartag's celebrated 
central limit theorem for convex bodies (\cite{Kla07a}).
Let $G_{d,k}$ denote the Grassmannian of all $k$-dimensional subspaces of $\R^d$
and let $\sigma_{d,k}$ be the unique rotationally invariant probability measure on $G_{d,k}$.

\begin{proposition}
\label{klartag}
(Klartag, \cite{Kla07b}.)
Let the random vector $X$ take values in $\R^d$ and assume that $X$ has an isotropic
log-concave distribution.
Let $S_k$ be a random $k$-dimensional subspace of $\R^d$ drawn from the distribution
$\sigma_{d,k}$. There exist universal 
constants $c,\kappa>0$ such that the following holds: if $k\le cd^{\kappa}$, then
with probability at least $1-e^{-c\sqrt{d}}$, for every measurable set $A\subset S_k$,
\[
   \left|\PROB\{\pi_k(X) \in A \} - \PROB\{N \in A\}\right| \le d^{-\kappa}
\]
where $N$ is a $k$-dimensional normal vector in $S_k$ with zero mean and identity covariance matrix, and $\pi_k$ is the orthogonal projection on $S_k$.
\end{proposition}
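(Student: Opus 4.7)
The plan is to reduce the measure bound to a Gaussian probability via Klartag's central limit theorem (Proposition \ref{klartag}) and then to estimate that Gaussian probability by exploiting that $k$ uniform random unit vectors in $\R^d$ are approximately orthonormal when $d$ is large compared to $k$.

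Each condition $\mu(H(x,U_i)) > \epsilon$ depends on $x$ only through $\langle \pi_k(x), U_i\rangle$, where $\pi_k$ is the orthogonal projection onto $S_k \defeq \text{span}(U_1,\ldots,U_k)$. Hence
\[
 \{x \in \R^d : \ol{D}_k(x) > \epsilon\} = \pi_k^{-1}(A), \quad A = \bigl\{y \in S_k : \langle y, U_i\rangle > t_i\ \forall i \bigr\},
\]
where $t_i$ is the $\epsilon$-quantile of $\langle X, U_i\rangle$ for $X\sim\mu$. I would invoke Proposition \ref{klartag} once on the $k$-dimensional projection $\pi_k(X)$ and once on each of the $k$ one-dimensional marginals $\langle X, U_i\rangle$; all $k+1$ Klartag events hold simultaneously with $U$-probability at least $1-(k+1)e^{-c\sqrt d}$. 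On this intersection, (i) $\mu(\pi_k^{-1}(B)) \le \PROB(N \in B) + d^{-\kappa}$ for every measurable $B \subset S_k$, where $N$ is a standard Gaussian on $S_k$; and (ii) $|\PROB(\langle X, U_i\rangle \le t) - \Phi(t)| \le d^{-\kappa}$ for every $t$, so $t_i \ge \Phi^{-1}(\epsilon-d^{-\kappa}) \ge \Phi^{-1}(\epsilon/2) =: q^*$ under the hypothesis $d^{-\kappa} \le \epsilon/2$. Thus $A \subset B^* := \{y \in S_k : \langle y, U_i\rangle > q^*\ \forall i\}$, and bookkeeping the Klartag errors across the $k+1$ invocations gives
\[
 \mu(\{\ol{D}_k > \epsilon\}) \le \PROB(N \in B^*) + (k+1) d^{-\kappa}.
\]

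I would then bound $\PROB(N \in B^*)$ by a sequential Gram--Schmidt argument. Let $V_1,\ldots,V_k$ be the Gram--Schmidt orthonormalization of $U_1,\ldots,U_k$, set $Z_i \defeq \langle N, V_i\rangle$ (so that $Z_1,\ldots,Z_k$ are i.i.d.\ $N(0,1)$), and write $\langle N, U_i\rangle = a_{ii} Z_i + R_i$ with $R_i = \sum_{j<i} \langle U_i, V_j\rangle Z_j$ and $a_{ii}^2 = 1 - \|P_{i-1} U_i\|^2$, where $P_{i-1}$ denotes the projection onto $\text{span}(U_1,\ldots,U_{i-1})$. Standard concentration for uniform vectors on the sphere (via the Beta law of $\|P U_i\|^2$ and a union bound) gives, with $U$-probability at least $1-ke^{-ck}$, that $\|P_{i-1} U_i\|^2 \le Ck/d$ for every $i$, so $a_{ii}$ is close to $1$ and $\Var(R_i) \le Ck/d$. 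Choosing a threshold $r$ of order $1/\sqrt{\log(1/\epsilon)}$, a Gaussian tail and a union bound give
\[
 \PROB(|R_i| > r \text{ for some } i) \le k\exp\!\left(-\frac{d\pi}{64 k \log(1/\epsilon)^2}\right),
\]
matching the third term of the statement. On the complementary event, for each $i$ the conditional probability $\PROB(\langle N, U_i\rangle > q^* \mid Z_1, \ldots, Z_{i-1}) = \Phi((R_i - q^*)/a_{ii})$ is at most $1-\epsilon/4$, because $r$ and $1-a_{ii}$ are both small compared to the margin $q^* - \Phi^{-1}(\epsilon/4) \asymp 1/\sqrt{\log(1/\epsilon)}$. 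Iterating multiplicatively over $i = 1, \ldots, k$ yields the $(1-\epsilon/4)^k$ term.

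The main obstacle is this last Gaussian computation. Three small parameters---the near-orthonormality error $1-a_{ii}$, the truncation threshold $r$ on $R_i$, and the Gaussian slope $\phi(q^*)$ at the relevant quantile---must be balanced so that each conditional factor is bounded by $1-\epsilon/4$ while $\PROB(\max_i |R_i| > r)$ still matches the claimed exponential. The Gram--Schmidt decomposition cleanly separates near-orthonormality failure from Gaussian-tail failure and reduces the problem to an iterated conditional expectation; the specific numerical constants (in particular $\pi/64$) come from combining $\Var(R_i) \le Ck/d$ with the choice $r \asymp 1/\sqrt{\log(1/\epsilon)}$.
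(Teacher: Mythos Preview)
A labeling issue first: the displayed statement is Proposition~\ref{klartag}, Klartag's central limit theorem, which the paper quotes from \cite{Kla07b} without proof. Your proposal does not attempt to prove it; what you have actually written is a proof sketch of Theorem~\ref{thm:shallow}, which \emph{uses} Proposition~\ref{klartag} as a black box. I compare against the paper's proof of Theorem~\ref{thm:shallow}.

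Your argument is correct and follows the same overall plan as the paper: invoke Klartag to pass to a Gaussian, then handle the non-orthogonality of $U_1,\ldots,U_k$ via Gram--Schmidt. The difference is only in where the split happens. The paper decomposes in $x$-space \emph{before} applying the $k$-dimensional Klartag,
\[
\{\langle x,U_i\rangle>\Phi^{-1}(\tfrac{\epsilon}{2})\ \forall i\}\ \subset\ \{\langle x,V_i\rangle>\Phi^{-1}(\tfrac{\epsilon}{4})\ \forall i\}\ \cup\ \bigcup_i\bigl\{\langle x,U_i-V_i\rangle>\Phi^{-1}(\tfrac{\epsilon}{2})-\Phi^{-1}(\tfrac{\epsilon}{4})\bigr\},
\]
and then applies Proposition~\ref{klartag} separately to the two pieces (once in dimension $k$ for the first, and $k-1$ further one-dimensional calls for the perturbation directions $(U_i-V_i)/\|U_i-V_i\|$). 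You instead apply the $k$-dimensional Klartag once to reduce immediately to a pure Gaussian probability $\PROB(N\in B^*)$, and only then split, via the sequential conditioning $\langle N,U_i\rangle=a_{ii}Z_i+R_i$ with $R_i$ measurable in $Z_1,\ldots,Z_{i-1}$. Both routes land on the same three-term bound; yours is marginally more economical, since the $k$ one-dimensional Klartag calls you make serve only the quantile comparison $t_i\ge q^*$ and thus contribute only to the failure probability, not to the $d^{-\kappa}$ term in the measure bound (so your $(k+1)d^{-\kappa}$ is a harmless overcount---a single $d^{-\kappa}$ would suffice in your version). The auxiliary facts you need, namely the quantile gap $\Phi^{-1}(\epsilon/2)-\Phi^{-1}(\epsilon/4)\gtrsim 1/\log(1/\epsilon)$ and the near-orthonormality bound $\|P_{i-1}U_i\|^2\le Ck/d$ with probability $1-e^{-ck}$, are precisely Lemmas~\ref{lem:PhiControl} and~\ref{orth} in the paper. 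One small slip: your truncation threshold $r$ should be of order $1/\log(1/\epsilon)$, not $1/\sqrt{\log(1/\epsilon)}$, in order to produce the exponent $d\pi/(64k\log(1/\epsilon)^2)$.
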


\medskip
\noindent
{\bf Proof of Theorem \ref{thm:shallow}:}
First note that the random subspace of $\R^d$ spanned by the independent uniform vectors
$U_1,\ldots,U_k$ has a rotation-invariant distribution and therefore it is distributed by $\sigma_{d,k}$
over the Grassmannian $G_{d,k}$.

For any $u\in S^{d-1}$, define $q(\epsilon,u)$ as the 
$\epsilon$-quantile of the distribution of $\langle X,u\rangle$, that is,
\[
  \mu(\{x: \langle x,u\rangle \le q(\epsilon,u) \}) = \epsilon~.
\]
Observe that, by Proposition \ref{klartag} (applied with $k=1$) and the union bound, 
with probability at least $1-ke^{-c\sqrt{d}}$, 
\[
\mbox{for all $i=1,\ldots,k$,} \quad
q(\epsilon,U_i) \ge \Phi^{-1}(\epsilon/2)
\]
whenever $d$ is so large that $d^{-\kappa}\le \epsilon/2$,
where $\Phi(z)= \int_{-\infty}^z (2\pi)^{-1/2} e^{-t^2/2}dt$ denotes the 
standard Gaussian cumulative distribution
function.

Then, with probability at least $1-ke^{-c\sqrt{d}}$, 
\begin{eqnarray*}
\mu\left(\left\{x:\ol{D}_k(x) > \epsilon \right\}\right)
& = & 
\mu\left(\left\{x: \min_{i=1,\ldots,k} \mu(H(x,U_i)) > \epsilon\right\}\right) \\
& = & 
\mu\left(\left\{x: \langle x,U_i\rangle  > q(\epsilon,U_i) \ \mbox{for all $i=1,\ldots,k$}
\right\} \right)
\\
& \le & 
\mu\left(\left\{x: \langle x,U_i \rangle > \Phi^{-1}(\epsilon/2) \ \mbox{for all $i=1,\ldots,k$}\right\} \right).
\end{eqnarray*}

If the $U_i$ were orthogonal, we could now use Proposition \ref{klartag}.
This is not the case but almost. 
In order to handle this issue, we perform Gram-Schmidt orthogonalization
defined, recursively, by $V_1=U_1$
and, for $i=2,\ldots,k$, 
\[
   R_i = \sum_{j=1}^{i-1} \langle U_i,V_j \rangle V_j  \quad \mbox{and} \quad
    V_i = \frac{U_i-R_i}{\|U_i-R_i\|}~.
\]
Then $V_1,\ldots,V_k$ are orthonormal vectors, spanning the same 
subspace as $U_1,\ldots,U_k$.

Now, we may write
\begin{eqnarray}
\label{eq1}
\lefteqn{
\mu\left(\left\{x:\ol{D}_k(x) > \epsilon\right\}\right)
} 
\nonumber \\*
& \le & 
\mu\left(\left\{x: \langle x,U_i\rangle > \Phi^{-1}(\epsilon/2) \ \mbox{for all $i=1,\ldots,k$} \right\}\right)  \nonumber \\
& \le & 
\mu\left(\left\{x: \langle x,V_i\rangle > \Phi^{-1}(\epsilon/4) \ \mbox{for all $i=1,\ldots,k$}\right\}\right)   \nonumber \\*
& & +
\mu\left(\left\{x: \langle x,U_i-V_i\rangle  > \Phi^{-1}(\epsilon/2)-\Phi^{-1}(\epsilon/4) \ 
\mbox{for some $i=1,\ldots,k$} \right\}\right)  \nonumber \\
& \le & 
\mu\left(\left\{x: \langle x,V_i\rangle  > \Phi^{-1}(\epsilon/4) \ \mbox{for all $i=1,\ldots,k$} \right\}\right)   \nonumber \\*
& & +
\sum_{i=2}^k \mu\left(\left\{x: \langle x,U_i-V_i\rangle  >  \frac{\sqrt{2\pi}}{4\log(1/\epsilon)}
\right\}\right)~,
\end{eqnarray}
where the last inequality follows from the union bound and  Lemma \ref{lem:PhiControl} below.

As $\langle x,V_1\rangle,\ldots,\langle x,V_k\rangle$ are coordinates of the orthogonal projection
of $x$ on the random subspace spanned by $U_1,\ldots,U_k$, we
may use Proposition \ref{klartag} to bound the first term
on the right-hand side of \eqref{eq1}. Let $N_1,\ldots,N_k$ be independent
standard normal random variables. Then by Proposition \ref{klartag},
with probability at least $1-e^{-c\sqrt{d}}$,
\begin{eqnarray*}
\lefteqn{
\mu\left(\left\{x: \langle x,V_i\rangle > \Phi^{-1}(\epsilon/4) \ \mbox{for all $i=1,\ldots,k$}\right\}\right)   
} \\
& \le &
\PROB\{ N_i > \Phi^{-1}(\epsilon/4) \ \mbox{for all $i=1,\ldots,k$} \}   
+ d^{-\kappa}  \\
& = & 
\PROB\{ N_1 > \Phi^{-1}(\epsilon/4)\}^k   
+ d^{-\kappa} \\
& = & 
(1-\epsilon/4)^k
+ d^{-\kappa} 
\end{eqnarray*}
It remains to bound the second term on the right-hand side of (\ref{eq1}).
Once again, we use Proposition \ref{klartag}. By rotational invariance, for $i\in[2,k]$,
the distribution of $ (U_i-V_i)/\|U_i-V_i\|$ is uniform on $S^{d-1}$ and
therefore the distribution of 

$$\mu\left(\left\{x: \langle x,U_i-V_i\rangle > \frac{\sqrt{2\pi}}{4\log(1/\epsilon)}
\right\}\right)$$
is the same as that of
\[
\mu\left(\left\{x: \langle x,W\rangle > \frac{\sqrt{2\pi}}{4\log(1/\epsilon)\|U_i-V_i\|}
\right\}\right)
\]
(if $\epsilon \le 1/2$)
where $W$ is uniformly distributed on $S^{d-1}$. By Lemma \ref{orth} below, with probability at least $1-ke^{-ck}$,

$$\max_{i=1,\ldots,k} \|U_i-V_i\| \le \sqrt{4k/d}.$$
Combining this with Proposition \ref{klartag}, we have that, with probability
at least $1-ke^{-ck} - ke^{-c\sqrt{d}}$,

\begin{align*}
\sum_{i=2}^k \mu\left(\left\{x: \langle x,U_i-V_i\rangle > \frac{\sqrt{2\pi}}{4\log(1/\epsilon)} \right\}\right) & \le kd^{-\kappa} + k\PROB\left\{N > \frac{\sqrt{2\pi}}{4\log(1/\epsilon)}\sqrt{\frac{d}{4k}} \right\} \\
& \le kd^{-\kappa} + ke^{\frac{-d\pi}{64 k \log(1/\epsilon)^2}}~.
\end{align*}

In the proof of Theorem \ref{thm:shallow}, we have used the following two lemmas. 
The proof of the first lemma may be found in the Appendix.

\begin{lemma}\label{lem:PhiControl}
For the standard Gaussian cumulative distribution $\Phi$ 
function and $0<\epsilon<e^{-2}$,
\[
\Phi^{-1}(\epsilon/2)-\Phi^{-1}(\epsilon/4)\geq  \frac{\sqrt{2\pi}}{4\log(1/\epsilon)} ~.
\]
\end{lemma}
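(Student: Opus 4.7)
Put $a = -\Phi^{-1}(\epsilon/4)$, $b = -\Phi^{-1}(\epsilon/2)$, and $L = \log(1/\epsilon) > 2$; then $a > b > 0$ and the claim becomes $a - b \geq \sqrt{2\pi}/(4L)$. The starting point is the identity
\[
\int_b^a \phi(t)\,dt \;=\; \Phi(-b) - \Phi(-a) \;=\; \frac{\epsilon}{4},
\]
where $\phi = \Phi'$ denotes the standard Gaussian density.

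The direct mean-value bound using only $\phi(t) \leq \phi(b)$ on $[b,a]$ would give $a - b \geq \epsilon/(4\phi(b))$, but combined with the sharpest Mills-ratio estimates for $\phi(b)$ this is insufficient near the boundary $\epsilon = e^{-2}$. I therefore use the stronger pointwise bound $\phi(t) \leq \phi(b)\,e^{-b(t-b)}$ on $[b,a]$, which follows from $\phi(t)/\phi(b) = e^{-(t-b)(t+b)/2}$ and $t + b \geq 2b$. Integrating yields
\[
\frac{\epsilon}{4} \;\leq\; \phi(b)\,\frac{1 - e^{-b(a-b)}}{b}, \qquad \text{so} \qquad a - b \;\geq\; \frac{1}{b}\log\frac{1}{1 - x},
\]
with $x = b\epsilon/(4\phi(b)) = b\,R(b)/2$, where $R(b) = \Phi(-b)/\phi(b)$ is the Mills ratio. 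Applying the classical lower bound $R(b) \geq 2/(b + \sqrt{b^2+4})$ gives $x \geq b/(b + \sqrt{b^2+4})$, whence
\[
a - b \;\geq\; \frac{1}{b}\log\!\Bigl(1 + \frac{b}{\sqrt{b^2 + 4}}\Bigr).
\]
Finally, the upper Mills bound $\Phi(-b) \leq \phi(b)/b$ together with $\Phi(-b) = \epsilon/2$ yields $b^2 \leq 2L$.

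The main obstacle is the remaining one-variable inequality
\[
\frac{1}{b}\log\!\Bigl(1 + \frac{b}{\sqrt{b^2 + 4}}\Bigr) \;\geq\; \frac{\sqrt{2\pi}}{4L}, \qquad \Phi(-b) = e^{-L}/2,\ L > 2.
\]
At the boundary $L = 2$ the two sides agree to within about one percent, so no sharpness can be dropped in either Mills bound used above. I would close the argument by verifying the inequality directly at $L = 2$ and then showing that the slack between the two sides is non-decreasing in $L$, using $db/dL = e^{-L}/(2\phi(b)) > 0$ from differentiating the implicit relation $\Phi(-b) = e^{-L}/2$.
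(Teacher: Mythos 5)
Your reduction is sound as far as it goes, and it is a genuinely different route from the paper's (which bounds the gap by concavity of $\Phi^{-1}$ — exactly the mean-value bound $a-b\ge \epsilon/(4\phi(b))$ you correctly identify as too weak near $\epsilon=e^{-2}$ — and then locates the quantile via Gordon's inequality). Your tilted bound $\phi(t)\le\phi(b)e^{-b(t-b)}$, the integration, and the two Mills estimates correctly give $a-b\ \ge\ \tfrac1b\log\bigl(1+\tfrac{b}{\sqrt{b^2+4}}\bigr)$ with $\Phi(-b)=e^{-L}/2$; the auxiliary claim $b^2\le 2L$ also holds, though as stated it needs the extra observation $b\ge\sqrt{2/\pi}$ (true since $\epsilon<e^{-2}$ forces $b>1.49$), or more simply the Chernoff bound $\Phi(-b)\le e^{-b^2/2}/2$.

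The genuine gap is in how you propose to close the remaining one-variable inequality. The slack $G(L)=\tfrac{1}{b(L)}\log\bigl(1+\tfrac{b(L)}{\sqrt{b(L)^2+4}}\bigr)-\tfrac{\sqrt{2\pi}}{4L}$ is \emph{not} non-decreasing in $L$: since $b(L)\to\infty$, the first term behaves like $\log 2/\sqrt{2L}$ while the second is $\sqrt{2\pi}/(4L)$, so $G(L)\to 0^+$ as $L\to\infty$; but $G$ is already of order $10^{-1}$ at moderate $L$ (e.g.\ $G(3)\approx 0.06$), so it must eventually decrease. Positivity of $db/dL$ does not help, because both sides of your inequality decrease in $L$ and the whole difficulty is comparing their rates. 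Hence "verify at $L=2$ and propagate by monotonicity of the difference" cannot work as written. If by "slack" you meant the ratio, its monotonicity looks true (it grows like $\sqrt{L}$), but that is itself a nontrivial claim requiring a real argument — e.g.\ using $b'(L)=\Phi(-b)/\phi(b)$ together with a quantitative bound on the logarithmic derivative of $b\mapsto\tfrac1b\log\bigl(1+\tfrac{b}{\sqrt{b^2+4}}\bigr)$ and an upper bound on $L$ in terms of $b$ from the lower Mills estimate — none of which is supplied. Finally, note that at $L=2$ your chain leaves only about a $0.2\%$ margin ($\approx 0.3140$ versus $\sqrt{2\pi}/8\approx 0.3133$ with $b\approx 1.493$), so the endpoint check itself needs a careful numerical evaluation of $\Phi^{-1}(e^{-2}/2)$ and leaves essentially no room for any further lossy step in whatever argument you use to cover $L>2$. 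Until that final inequality is actually proved, the lemma is not established by your argument.
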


\begin{lemma}
\label{orth}
For every $i=1,\ldots,k$, with probability at least $1-e^{-ck}$,
\[
   \|U_i-V_i\| \le \sqrt{\frac{4k}{d}}
\]
where $c$ is a universal constant, possibly different than the previous constant $c$.
\end{lemma}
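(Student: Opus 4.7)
The plan is to reduce the bound on $\|U_i-V_i\|$ to a concentration statement about the squared norm of the projection $R_i$ of $U_i$ onto the span of $U_1,\ldots,U_{i-1}$, and then invoke rotational invariance together with chi-squared tail bounds.

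First I would do the Gram--Schmidt bookkeeping. Let $F_{i-1}=\mathrm{span}(U_1,\ldots,U_{i-1})$ and note that $R_i$ is the orthogonal projection of $U_i$ onto $F_{i-1}$, so $\langle U_i,R_i\rangle=\|R_i\|^2$ and $\|U_i-R_i\|=\sqrt{1-\|R_i\|^2}$. A direct computation gives $\langle U_i,V_i\rangle = (1-\|R_i\|^2)/\|U_i-R_i\|=\sqrt{1-\|R_i\|^2}$, so that
\[
\|U_i-V_i\|^2 \;=\; 2\bigl(1-\langle U_i,V_i\rangle\bigr) \;=\; 2\bigl(1-\sqrt{1-\|R_i\|^2}\bigr) \;\le\; 2\|R_i\|^2,
\]
where the last step uses $\sqrt{1-s}\ge 1-s$ for $s\in[0,1]$. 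Hence it is enough to show $\|R_i\|^2\le 2k/d$ with probability at least $1-e^{-ck}$.

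Next I would invoke rotational invariance to compute the distribution of $\|R_i\|^2$. Conditionally on $U_1,\ldots,U_{i-1}$, the vector $U_i$ is uniform on $S^{d-1}$ and independent of $F_{i-1}$, which is almost surely of dimension $i-1$. Writing $U_i=Z/\|Z\|$ with $Z\sim \mathcal{N}(0,I_d)$,
\[
\|R_i\|^2 \;=\; \frac{\|P_{F_{i-1}}Z\|^2}{\|Z\|^2} \;=\; \frac{A}{A+B},
\]
where $A=\|P_{F_{i-1}}Z\|^2\sim\chi^2_{i-1}$ and $B=\|P_{F_{i-1}^\perp}Z\|^2\sim\chi^2_{d-i+1}$ are independent.

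I would then apply the Laurent--Massart tail bounds to both chi-squared variables. For a suitably small universal constant $\alpha>0$, with probability at least $1-2e^{-\alpha k}$ one has simultaneously $A\le C_1 k$ (using $i-1\le k-1$) and $B\ge d/2$, the latter using that $d$ is much larger than $k$ (which is guaranteed by the ambient assumption $k\le cd^{\kappa}$ with $\kappa<1$, and a fortiori $d\ge 4k$ whenever $d$ is large enough). Combining these gives $\|R_i\|^2 \le 2C_1 k/d$; choosing $\alpha$ small enough makes $C_1\le 1$, yielding $\|R_i\|^2\le 2k/d$ and hence $\|U_i-V_i\|^2\le 4k/d$. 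There is no real obstacle here: the argument is a routine combination of Gram--Schmidt geometry and chi-squared concentration, and the only thing requiring some care is tuning the Laurent--Massart constants so that the final bound comes out with the clean numerical factor $4k/d$ promised by the statement.
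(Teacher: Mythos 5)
Your route is essentially the paper's: the same Gram--Schmidt identity $\|U_i-V_i\|^2=2(1-\sqrt{1-\|R_i\|^2})\le 2\|R_i\|^2$, the same Gaussian representation $U_i=Z/\|Z\|$, and the same reduction to $\chi^2$ tail bounds; your only structural deviation is writing $\|R_i\|^2=A/(A+B)$ with \emph{independent} $\chi^2_{i-1}$ and $\chi^2_{d-i+1}$ components, whereas the paper uses the dependent ratio $A/B$ with $B=\|Z\|^2\sim\chi^2_d$ and a union bound, which makes no real difference.

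The one step that fails as written is the final tuning: you cannot make $C_1\le 1$ by shrinking $\alpha$. The bound must hold for every $i\le k$, and for $i=k$ one has $A\sim\chi^2_{k-1}$, whose mean is $k-1$; by the central limit theorem $\Pr\{\chi^2_{k-1}>k\}\to 1/2$, so no threshold at or below $k$ admits a tail bound of the form $e^{-\alpha k}$ with a universal $\alpha>0$, however small. You must take the threshold a constant factor above the mean, e.g.\ $A\le 2k$ as the paper does, and then your chain $\|U_i-V_i\|^2\le 2\|R_i\|^2\le 2A/B\le 8k/d$ only yields $\sqrt{8k/d}$. This is a constant-factor blemish rather than a conceptual gap: either accept the worse constant (which is all that Theorem \ref{thm:shallow} needs, up to adjusting the exponent $d\pi/(64k\log(1/\epsilon)^2)$), or recover the factor by not discarding the square root, since $2(1-\sqrt{1-s})=2s/(1+\sqrt{1-s})$ is essentially $s$ (not $2s$) in the relevant regime $s\le 4k/d\ll 1$. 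It is worth noting that the paper's own write-up is loose at exactly the same spot: it first records only $\|U_i-V_i\|^2\le 2A/B$ and then controls $\Pr\{A/B\ge 4k/d\}$, which strictly speaking also gives $8k/d$ unless one uses the sharper form of the Gram--Schmidt identity.
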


\begin{proof}
Note that, since $\|R_i\|^2= \langle U_i,R_i\rangle$, 
\[
   \langle U_i,V_i\rangle = \frac{1-\langle U_i,R_i\rangle }{\|U_i-R_i\|} = \sqrt{1-\|R_i\|^2}
   \ge 1-\|R_i\|^2
\]
and therefore
\[
   \|U_i-V_i\|^2= 2(1-\langle U_i,V_i\rangle ) \le 2\|R_i\|^2 = 2 \sum_{j=1}^{i-1} \langle U_i,V_j\rangle^2~.
\]
We may write $U_i=Z_i/\|Z_i\|$ where $Z_i$ is a Gaussian vector in 
$\R^d$ with zero mean and identity covariance matrix. Since $Z_i$
is independent of $V_1,\ldots,V_{i-1}$ and the $V_j$ are orthonormal,
$\sum_{j=1}^{i-1} \langle Z_i,V_j\rangle^2$ is a $\chi^2$ random variable with $i-1$ degrees
of freedom. Thus, $\|U_i-V_i\|^2=2A/B$ is the doubled ratio of a $\chi^2(i-1)$ random variable $A$ and a $\chi^2(d)$
random variable $B$ (which are not independent). 
Then, with probability at least $1-e^{-ck}$,
\[
\|U_i-V_i\|^2 \le \frac{4k}{d}~.
\]
To see this, we may use standard tail bounds
for the $\chi^2$ distribution (see, e.g., \cite{BoLuMa13}). Indeed,
\[
    \PROB\left\{ \frac{A}{B} \ge \frac{4k}{d} \right\} 
\le  \PROB\left\{ A \ge 2k \right\} + \PROB\left\{ B \le d/2 \right\}~.
\]
Using $d\ge k$, both terms may be bounded by $e^{-ck}$ using the inequality of
Remark 2.11 in \cite{BoLuMa13}.

\end{proof}

\section{Estimating intermediate depth is costly}
\label{sec:intermediate}

In this section we prove that, even though the random Tukey depth
is small for most points $x\in \R^d$ (according to the measure $\mu$), whenever the
depth $\ol{d}(x)$ of a point is not small, its random Tukey depth $\ol{D}_k(x)$ is close to
$1/2$, unless $k$ is exponentially large in $d$. This implies that for points whose
depth is bounded away from $1/2$, the random Tukey depth is a poor approximation of $\ol{d}(x)$.

The main result of the section is the following theorem that immediately implies Corollary \ref{cor:intermediate} stated in Section \ref{sec:intro}.

\begin{theorem}
\label{thm:intermediate}
Assume that $\mu$ is an isotropic log-concave measure on $\R^d$ and let
$0<\gamma<1/2$. Let $x\in\R^d$ be such that $\ol{d}(x)=\gamma$ and let $\epsilon >0$.
Then
\[
  \PP\left\{ \ol{D}_k(x)\leq \frac{1}{2}-C_{\gamma}\epsilon\log^2\left( \frac{1}{\epsilon}\right) \right\}  \leq  2ke^{-(d-1)\epsilon^2/2}~,
\]
where $C_{\gamma}>0$ is a constant depending only on $\gamma$.
\end{theorem}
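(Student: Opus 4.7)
The plan is to reduce, by a union bound over the $k$ independent directions, to a one-direction statement: for a single uniform $U\in S^{d-1}$ the halfspace measure $g(U):=\ol{r}(x,U)=\mu(H(x,U))$ concentrates around $1/2$ with the desired Gaussian tail, after which the conclusion follows. Two preliminary observations set the stage. First, the depth condition $\ol{d}(x)=\gamma$ applied with the direction $u=-x/\|x\|$ gives $\PP\{\langle X, x/\|x\|\rangle\ge \|x\|\}\ge \gamma$, and since $\langle X, x/\|x\|\rangle$ is isotropic log-concave on $\R$ (mean $0$, variance $1$), the standard exponential tail bound yields $\|x\|\le R_\gamma := c^{-1}\log(C/\gamma)$ for universal $c,C>0$. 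Second, since $\mu$ has no atoms, $g(-U)=1-g(U)$, so the law of $g(U)$ under uniform $U$ on $S^{d-1}$ is symmetric about $1/2$; in particular $\mathbb{E}_U[g(U)]=1/2$ and $1/2$ is a median of $g$.

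The core step is a Lipschitz estimate for $g$ on the sphere. For $U,U'\in S^{d-1}$ at angle $\theta$, the symmetric difference of $H(x,U)$ and $H(x,U')$ is a ``bow-tie'' region with apex at $x$ and opening $\theta$, so
\[
|g(U)-g(U')|\le \mu\bigl(\{y:\mathrm{sgn}\langle y-x,U\rangle\neq \mathrm{sgn}\langle y-x,U'\rangle\}\bigr).
\]
Projecting $\mu$ onto the two-plane spanned by $\{U,U'\}$ gives a $2$-dimensional log-concave marginal, essentially isotropic (covariance close to the identity), whose density is bounded by a universal constant. Splitting the bow-tie into a core inside a disk of radius $O(R_\gamma)$ around the apex, together with an exponentially small log-concave tail, yields an estimate of the form $|g(U)-g(U')|\le L\,\|U-U'\|$ with $L=L(\gamma)$ depending only on $\gamma$ (up to a mild logarithmic factor in the target precision). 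Combined with Levy's concentration inequality on $S^{d-1}$ applied to the $L$-Lipschitz function $g$ with median $1/2$, this gives
\[
\PP_U\bigl\{g(U)\le \tfrac{1}{2}-t\bigr\}\le e^{-(d-1)t^2/(2L^2)},
\]
and setting $t=C_\gamma\,\epsilon\log^2(1/\epsilon)$ and union-bounding over $U_1,\ldots,U_k$ delivers the claimed inequality.

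The main technical obstacle is the bow-tie estimate. The apex is displaced from the origin by as much as $R_\gamma$, so the standard argument for a bow-tie through the barycenter of a 2D isotropic log-concave measure must be upgraded to account for this offset; I expect the $\log^2(1/\epsilon)$ factor in the statement to come precisely from the scale $O(\log(1/\epsilon))$ at which the 2D log-concave tail must be truncated in order to achieve precision $\epsilon$, combined with the $R_\gamma$-dependence. A cleaner but less sharp alternative route would decompose $g(U)-1/2 = [\ol{r}(x,U)-\ol{r}(0,U)]+[\ol{r}(0,U)-1/2]$, control the first difference by spherical concentration of $\langle x,U\rangle\sim O(\|x\|/\sqrt{d})$ combined with the universal bound $\|f_U\|_\infty=O(1)$ for isotropic log-concave 1D densities, and the second by invoking Klartag's Proposition~\ref{klartag} at $k=1$; however, the Klartag step introduces an $e^{-c\sqrt{d}}$ error that the statement's bound $2ke^{-(d-1)\epsilon^2/2}$ does not tolerate, so the Lipschitz-plus-Levy route is the one I would pursue.
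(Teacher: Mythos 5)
Your proposal follows essentially the same route as the paper: bound $\|x\|\lesssim\log(1/\gamma)$ via one-dimensional log-concave tail estimates, control $|\mu(H(x,u))-\mu(H(x,v))|$ by the measure of the two-dimensional ``bow-tie'' cones through the projected measure (bounded isotropic constant in dimension two plus exponential decay of halfspace measures around $x$), then apply L\'evy's isoperimetric inequality on $S^{d-1}$ around the median $1/2$ and a union bound over the $k$ directions. The only correction needed is that the map $u\mapsto\mu(H(x,u))$ is not established to be genuinely Lipschitz (truncating the cone at a fixed radius $O(R_\gamma)$ leaves a tail term that does not vanish with the angle); one must truncate at radius of order $\log(1/\theta)$, obtaining the modulus $C_\gamma\|u-v\|\log^2(1/\|u-v\|)$ and applying L\'evy in its set form to $A=\{v:\mu(H(x,v))\ge 1/2\}$ -- precisely the mechanism you identify in your closing paragraph, and exactly what the paper does.
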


\begin{proof}
  Without loss of generality, we may assume that the origin has maximal depth, that is,
  $\ol{d}(0)=\sup_{x\in \R^d}\ol{d}(x)$. Fix $x\in \R^d$ with $\ol{d}(x)= \gamma$, and note that $\ol{d}(0)\geq \gamma$.
  
The main tool of this proof is L\'evy's isoperimetric inequality (\cite{Sch48}, \cite{Lev51}, see also \cite{ledoux2001concentration}). It states that if the random vector $U$ is uniformly distributed on the sphere
  $S^{d-1}$ and $A$ a is Borel-measurable set such that $\PROB\{U\in
  A\} \geq 1/2$, then for any $\epsilon>0$,
\begin{equation}\label{eq:LevyTheorem}
\PROB\left\{ \inf_{v\in A} \|U-v\| \geq \epsilon \right\} \leq 2e^{-(d-1)\epsilon^2/2}~.
\end{equation}
L\'evy's inequality may be used to prove concentration inequalities for smooth functions of
the random vector $U$. Our goal is to prove that the measure $\mu(H(x,U))$ of the random halfspace
$H(x,U)$ is concentrated around its median $1/2$.

In order to prove smoothness of the function $\mu(H(x,u))$ (as a function of $u\in S^{d-1}$), 
fix $u,v\in S^{d-1}$, $u\neq v$. Consider the $2$-dimensional cone spanned by the segments $(x,u)$ and $(x,v)$
defined by
\[
  C(x,u,v)=\left\{x +  a u+b v: \ a,b \in \R^+  \right\}~.
\]

% \begin{figure}[h]
% \includegraphics[width=8cm]{cone c(u,v).png}
% \centering
% \caption{Illustration of the cone $C(x,u,v)$.}
% \label{fig:cone}
% \end{figure}

Denote by $\mathcal{H}$ the only two-dimensional affine space
containing $x, \ x+u, \ x+v$.

We also define $P_{\mathcal{H}}$ as the orthogonal projection onto $\mathcal{H}$.
%Since the application $\langle u, \ . \ \rangle$ and $\langle u,P_{\mathcal{H}}(.) \rangle$ are equal,
Denoting by $\widetilde{\mu}=P_{\mathcal{H}}\# \mu $ the pushforward of $\mu$ by $P_{\mathcal{H}}$, and $\widetilde{H}(x,u)=P_{\mathcal{H}} \left( H(x,u) \right)$, we have
\[
  \mu(H(x,u))= \widetilde{\mu}\left( \widetilde{H}(x,u)\right)~.
\]
Thus, after projecting on the plane $\mathcal{H}$, it suffices to control
\begin{eqnarray}
\label{eq:conemeasure}
  |\mu(H(x,u)) - \mu(H(x,v))| & = &
  \left|\widetilde{\mu}\left(
                                    \widetilde{H}(x,u)\right)-\widetilde{\mu}\left(
                                    \widetilde{H}(x,v)\right)\right|
 \nonumber  \\ & = & \left|\widetilde{\mu}
                     \left(C(x,u^{\perp},v^{\perp})\right) - \widetilde{\mu}\left(C(x,-u^{\perp},-v^{\perp})\right) \right|
 \nonumber  \\ & \le & \widetilde{\mu}
                     \left(C(x,u^{\perp},v^{\perp})\right) + \widetilde{\mu}\left(C(x,-u^{\perp},-v^{\perp})\right) ~,
\end{eqnarray}
that is, the measure of two cones in a $2$ dimensional
affine space. Here, given an arbitrary orientation to the plane
$\mathcal{H}$,  $u^{\perp}$ and $v^{\perp}$ are the only unit
vectors orthogonal to $u$ and $v$, respectively, in $\mathcal{H}$ such
that $u^{\perp}$ and $v^{\perp}$ are rotated $90$ degrees counter-clockwise from $u$ and $v$,
see Figure \ref{fig:cones}.

\begin{figure}[h]
%\caption{Illustration of .}
\includegraphics[width=8cm]{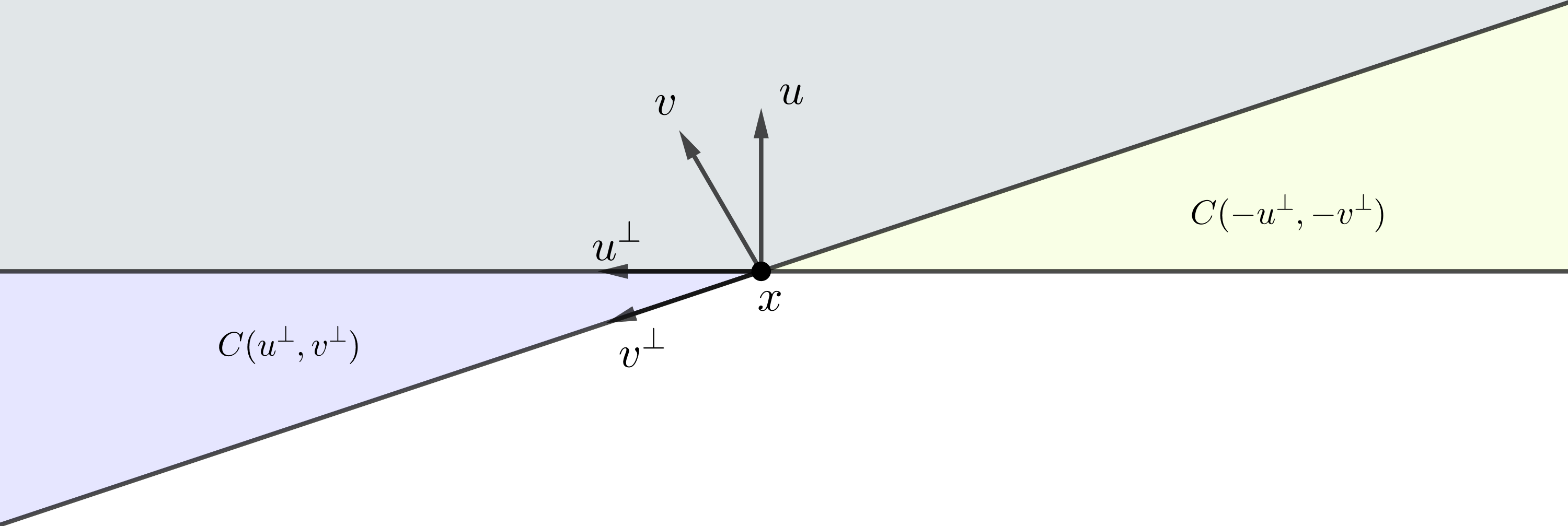}
\centering
\caption{Illustration of the cones $C(x,u^{\perp},v^{\perp})$ and $C(x,-u^{\perp},-v^{\perp})$.}
\label{fig:cones}
\end{figure}

Since the measure $\wt\mu$ is itself an isotropic log-concave measure
(see \cite[Section 3]{saumard2014log}, \cite{Pre73}), the problem
becomes two dimensional.
Next, we show 
that neither $\|x\|$ nor $|m_v|$ are too
large, where $m_v$ denotes the median of the random variable $\langle
X,v\rangle$. (Note that $m_v$ is uniquely defined since $\langle
X,v\rangle$ is log-concave and therefore has a unimodal density.)

In the Appendix we gather some useful facts on log-concave densities.
In particular,  Lemma \ref{exponentialupperbound} shows that any one dimensional
 log-concave density with unit variance is upper bounded by an
 exponential
 function centered at the median of the log-concave density. Since
 $\ol{d}(0)\leq \ol{r}(0,v)$ for all $v\in S^{d-1}$, Lemma \ref{exponentialupperbound} used with both $v$ and $-v$
implies that there exist universal constants $c_1,c_2>0$ such that
\[
  \ol{d}(0) \leq c_1e^{-c_2 |m_v| }~.
\]
Since $\ol{d}(0)\geq\gamma$, we have
\begin{equation}\label{medianlocalisation}
 c_2|m_v|\leq \log(c_1/\gamma)~.
\end{equation}
Moreover, since $\ol{d}(x)=\gamma$, the same argument leads to 
\[
  \gamma \leq c_1 e^{-c_2|\langle x,v\rangle -m_v|}~.
\]  
For $x\neq 0$, we use the above with $v=x/\left\|x\right\|$ and the inequality $|a-b|\geq |a|-|b|$ to yield

$$ c_2\|x\| \leq \log(c_1/\gamma)+c_2 | m_{x/\|x\|}|~, $$
which, put together with \eqref{medianlocalisation}, implies
\begin{equation}\label{eq:xlocalisation}
\left\| x \right\| \leq c\log(c_1/\gamma)~,
\end{equation}
for a positive constant $c$. In particular, $\left\|
  P_{\mathcal{H}}(x)\right\|\leq c\log(c_1/\gamma)$.
We use this inequality to control the measure of half
spaces around $x$.
Using Lemma \ref{exponentialupperbound} we can uniformly upper bound
the measure of every halfspace around the median by
\[
  \wt\mu\left( \wt{H}(m_v v-tv,v)\right) \leq c_1e^{-c_2t}~,
\]
for $t\geq0$. Now using \eqref{medianlocalisation} and
\eqref{eq:xlocalisation}, we may uniformly bound the measure of
halfspaces around $x$. In particular, there exist constants
$c_{\gamma},c'_{\gamma}>0$ such that for all $t \geq 0$ and $u\in S^1$,
\begin{equation}
\label{half space weight up}
\wt\mu\left( \wt{H}\left( x- t u,u \right) \right)  \leq c_{\gamma} e^{-c_{\gamma}'t}~. 
\end{equation}
Next we use the fact that an isotropic log-concave
density in $\R^2$ is upper bounded by a universal constant. 
Obtaining upper bounds for log-concave densities is an important
problem in high-dimensional geometry. In particular, the so-called
\emph{isotropic constant} of a log-concave density $f$ is defined by
\[
  L_f:= \sqrt{\sup_{x\in\R^d} f(x)}  \sqrt[4]{\text{det}\left(
      \text{Cov}(X) \right)} ~,
\]
where $X$ is a random variable with densitiy $f$. It has a deep connection to Bourgain's ``slicing problem'' and the 
Kannan-Lovász-Simonovits 
conjecture, see, e.g., \cite{LUTWAK1993151,klartag2022bourgain}. Here we only need the simple fact that in a fixed dimension ($d=2$ in
our case) one has
$\sup_f L_f \leq K$
for a constant $K$, where the supremum is taken over all possible log-concave density functions. For an isotropic log-concave density, $L_f=\sqrt{\sup f}$, so indeed there exists an universal constant $K$ which upper bounds any log-concave isotropic density in dimension $2$.

Now we are ready to derive upper bounds for the right-hand side of \eqref{eq:conemeasure}.
To this end, we decompose the cone $C(x,u^{\perp},v^{\perp})$ into two
parts. For any $t>0$ we may write
\[
 \wt{\mu} \left(C(x,u^{\perp},v^{\perp})\right)
 \le \wt{\mu} \left(C(x,u^{\perp},v^{\perp}) \cap B(x,t)\right)
 + \wt{\mu} \left(C(x,u^{\perp},v^{\perp}) \cap \wt{H}(x-t u,u) \right)~,
\]
where $B(x,t)$ denotes the closed ball of radius $t$ centered at $x$.
Thus, from \eqref{half space weight up} and the upper bound on the
density, we obtain
\[
  \widetilde{\mu}\left( C(x,u^{\perp},v^{\perp})\right)  \leq
  K \pi t^2  \theta + c_{\gamma} e^{-c_{\gamma}'t}~,
\]
where $\theta \in [0,\pi]$ denotes the angle formed by vectors $u$ and $v$. We use a similar argument to get the saqme control on $C(x,-u^{\perp},-v^{\perp})$.
Choosing \linebreak $t=\log(1/\theta)/c_{\gamma}'$, \eqref{eq:conemeasure}
implies
\begin{equation*}
|\mu(H(x,u))-\mu(H(x,v))| \leq  C'_{\gamma}  \theta\log^2\left(\frac{1}{\theta}\right)
\end{equation*}
for a constant $C'_{\gamma}$ depending only on $\gamma$.
Since $\theta \leq \frac{\pi}{2}\|u-v\|$, we conclude that there exists a positive constant $C_{\gamma}$ such that
\begin{equation}\label{eq:Regularity}
|\mu(H(x,u))-\mu(H(x,v))| \leq  C_{\gamma} \|u-v\|\log^2\left(\frac{1}{\|u-v\|}\right)~.
\end{equation}
Now we are prepared to use Lévy's isoperimetric inequality. Choosing \linebreak
$A=\left\{v\in S^{d-1}: \ \mu(H(x,v))\geq 1/2 \right\}$, we clearly
have $\PROB\{U\in A\}= 1/2$ and therefore by \eqref{eq:LevyTheorem} 
\[
  \PROB\left\{ \inf_{v\in A} \|U-v\|\geq \epsilon \right\} \leq
  2e^{-(d-1)\epsilon^2/2}~.
\]
But for any $u\in S^{d-1}$ such that $\inf_{v\in A} \|u-v\| < \epsilon$,
\eqref{eq:Regularity} implies that
\[
  \mu(H(x,u)) >
  \frac{1}{2}-C_{\gamma}\epsilon\log^2\left(\frac{1}{\epsilon}\right)~,
\]
so
\[
  \PROB\left\{\mu(H(x,U))\leq
    \frac{1}{2}-C_{\gamma}\epsilon\log^2\left(\frac{1}{\epsilon}\right)
  \right\} \leq 2e^{-(d-1)\epsilon^2/2}~.
\]
% The same reasoning with $B=\left\{u\in S^{d-1}: \ \mu(H(x,u))\leq 1/2 \right\}$ leads to
% \begin{equation}
% \PROB\left\{ \left|\mu(H(x,U)-\frac{1}{2}\right| \geq C_{\gamma}\frac{\epsilon}{\log^2\left( \frac{1}{\epsilon}\right)} \right\} \leq 4e^{-(d-1)\epsilon^2/2}~.
% \end{equation}
Since $\ol{D}_k(x)=\min_{i=1,\dots, k}\mu(H(x,U_i))$ for $U_1,\ldots,U_k$ independently sampled uniformly on $S^{d-1}$, the union bound yields
\[
  \PP\left\{ \ol{D}_k(x) \leq
    \frac{1}{2}-C_{\gamma}\epsilon\log^2\left(
        \frac{1}{\epsilon}\right) \right\} \leq
  2ke^{-(d-1)\epsilon^2/2}~,
\]
concluding the proof.
\end{proof}

\section{Detection and localization of Tukey's median}
\label{sec:deep}

As explained in the introduction, a measure $\mu$ is called halfspace
symmetric if there exists a point $m\in \R^d$ with $\ol{d}(m)=1/2$.
Such a point is necessarily unique if $\mu$ has a density. Since it maximises Tukey's depth it is the Tukey median.
Clearly, for all $k\ge 1$, the random Tukey depth of the Tukey median
equals $\ol{D}_k(m)=1/2$. For $k>d$, if $\ol{D}_k(x)=1/2$ then $x$ is almost surely the Tukey median
and therefore, $1/2$ is trivially an exact estimate of the Tukey depth of
$m$. Here we show that, for any positive $\gamma$ bounded by some
constant,
already for values of $k$ that are of the order
of $d\log d$, all points that are at least a distance of 
order $\gamma \sqrt{d}$ away from $m$ have a random Tukey depth less than
$1/2-\gamma$, with high probability. This result implies that 
the Tukey median of isotropic log-concave, halfspace symmetric
distributions are efficiently estimated by the random Tukey median,
as stated in Corollary \ref{cor:deep}.

\begin{theorem}
\label{thm:deep}
Assume that $\mu$ is an isotropic log-concave, halfspace symmetric measure on $\R^d$ and let $m$ be the Tukey median.
Let $\delta>0$ and let $\gamma,r>0$ be such that
$r\ge 8\sqrt{2} e^4 \gamma$ and $r\le \min\left(e^{-4}/3, 8e^4\gamma \sqrt{d/2}\right)$.
There exists a universal constant $C>0$ such that, if
\[
  k \geq
  C\left(d\log \frac{r}{\gamma} +\log(1/\delta)\right) \frac{\gamma\sqrt{d}}{r}e^{C\gamma^2d/r^2}  ~,
\]
then
\[
  \PP\left\{ \sup_{x\in \R^d: \left\| x-m\right\| \geq r} \ol{D}_k (x)
    \geq \frac{1}{2}- \gamma  \right\} \ \leq \ \delta~. 
\]
In particular, by taking $r=8e^4\gamma\sqrt{d/2}$,
there exist universal constants $c,C>0$ such that for all $\gamma \le
c$, if
\[
  k \geq
  C\left(d\log d+\log(1/\delta)\right)~,
\]
then
\[
  \PP\left\{ \sup_{x\in \R^d: \left\| x-m\right\| \geq C\gamma\sqrt{d}}\ol{D}_k (x) \geq \frac{1}{2}- \gamma  \right\} \ \leq \ \delta~. 
\]
\end{theorem}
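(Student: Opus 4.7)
The strategy is to rewrite the bad event as a polyhedral containment and reduce to a spherical covering. Assume WLOG $m=0$, so halfspace symmetry yields that $\langle X,u\rangle$ has median $0$ for every $u\in S^{d-1}$. For each $u$, let $q_u\le 0$ denote the $(1/2-\gamma)$-quantile of $\langle X,u\rangle$. Since $\mu(H(x,u))\ge 1/2-\gamma$ iff $\langle x,u\rangle\ge q_u$, we have
\[
\{x:\ol{D}_k(x)\ge 1/2-\gamma\}=\bigcap_{i=1}^k \{x:\langle x,U_i\rangle\ge q_{U_i}\}.
\]
The goal is to show this polyhedron lies in $B(0,r)$ with probability at least $1-\delta$.

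The first step is to control $|q_u|$ quantitatively. The projection $u_*\mu$ is a one-dimensional isotropic log-concave density with median zero: its density is bounded above by a universal constant $K$ and, at the median, bounded below by a universal $c_0$ (using $f(\mathrm{median})\sigma\ge 1/\sqrt{12}$ for any log-concave density of standard deviation $\sigma$). Log-concavity extends the lower bound to a fixed neighborhood of zero, and the hypotheses $r\ge 8\sqrt{2}e^4\gamma$ and $r\le e^{-4}/3$ force $\gamma$ small enough that $|q_u|\le\gamma/c_0$ lies in that neighborhood. Hence $\gamma/K\le|q_u|\le\gamma/c_0$ uniformly in $u$.

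Along any ray $\{tv:t\ge 0\}$, the polyhedron extends up to $t_{\max}(v)=\min_{i:\langle v,U_i\rangle<0}|q_{U_i}|/|\langle v,U_i\rangle|$, so $\bigcap_i\{\langle x,U_i\rangle\ge q_{U_i}\}\subseteq B(0,r)$ is equivalent to requiring, for every $v\in S^{d-1}$, some $U_i$ with $\langle v,U_i\rangle\le -\gamma/(Kr)$. I would take an $\eta$-net $\mathcal{M}$ of $S^{d-1}$ with $\eta=\gamma/(2Kr)$, so $|\mathcal{M}|\le (Cr/\gamma)^d$, and by the triangle inequality reduce to showing that for every $w\in\mathcal{M}$ some $U_i$ satisfies $\langle w,U_i\rangle\le -3\gamma/(2Kr)$. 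By rotational invariance, the probability that a single $U$ falls in such a cap equals $p:=\PROB\{T\le -\tau\}$ with $T=\langle e_1,U\rangle$ (density proportional to $(1-t^2)^{(d-3)/2}$) and $\tau=3\gamma/(2Kr)$. The classical cap-volume estimate gives
\[
p\ \ge\ c\,\frac{r}{\gamma\sqrt{d}}\,\exp\!\left(-c'\,\frac{\gamma^2 d}{r^2}\right),
\]
which is sharp in the regime $\tau\sqrt{d}\gtrsim 1$ secured by the hypothesis $r\le 8e^4\gamma\sqrt{d/2}$. A union bound gives failure probability at most $|\mathcal{M}|(1-p)^k\le\exp(d\log(Cr/\gamma)-pk)$, which is at most $\delta$ when $k\ge (1/p)(d\log(Cr/\gamma)+\log(1/\delta))$, matching the stated bound on $k$. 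The ``in particular'' assertion follows by substituting $r=8e^4\gamma\sqrt{d/2}$, which makes $\gamma\sqrt{d}/r$ and $\gamma^2 d/r^2$ both $O(1)$ and $\log(r/\gamma)=O(\log d)$.

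\textbf{Main obstacle.} The delicate point is the first step: extracting sharp universal constants for the density bounds $c_0\le f_u\le K$ on a quantitative neighborhood of the median, uniformly over $u$ and over all isotropic log-concave $\mu$. Tracking these constants through the argument is what produces the specific numerical thresholds $8\sqrt{2}e^4$, $e^{-4}/3$, $8e^4$ in the hypotheses. A secondary subtlety is verifying that the parameter regime pinned down by the hypotheses places the cap bound in the Gaussian-tail regime where the displayed lower bound on $p$ is valid and tight.
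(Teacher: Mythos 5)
Your argument is correct in substance and reaches the stated bound, and it relies on the same key ingredients as the paper---one-dimensional density bounds for isotropic log-concave projections near the median, the spherical-cap lower bound of \cite{BrGrKaKlLoSi01} in the regime enforced by $8\sqrt{2}e^4\gamma\le r\le\min\left(e^{-4}/3,\,8e^4\gamma\sqrt{d/2}\right)$, and an $\epsilon$-net plus union bound with net size $(Cr/\gamma)^d$---but it packages them differently. The paper first proves a pointwise bound for fixed $x$ with $\|x\|=r$, and then needs two additional steps: a Lipschitz bound for $x\mapsto\ol{D}_k(x)$ (via the universal bound $e^{7.1}$ on one-dimensional isotropic log-concave densities, Lemma \ref{density upper bound}) in order to pass to a net of the sphere $r\cdot S^{d-1}$, and a separate monotonicity argument ($\ol{D}_k(ax)\le\ol{D}_k(x)$ for $a\ge 1$) to cover all $\|x\|\ge r$. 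Your quantile/polyhedron reformulation $\{\ol{D}_k\ge 1/2-\gamma\}=\bigcap_i\{\langle\cdot,U_i\rangle\ge q_{U_i}\}$ together with the ray analysis of $t_{\max}(v)$ merges both of these steps: star-shapedness of the superlevel set is built in, you net directions on $S^{d-1}$ rather than points at radius $r$, and no Lipschitz lemma is needed; the price is the uniform quantile control, which amounts to exactly the density bounds the paper proves in its appendix (Lemmas \ref{density lower bound bis} and \ref{density upper bound}). One constant needs fixing: containment of the polyhedron in the ball of radius $r$ along a ray $v$ requires some $i$ with $\langle v,U_i\rangle\le -|q_{U_i}|/r$, so the sufficient threshold must use the \emph{upper} bound $|q_u|\le\gamma/c_0$ (the density \emph{lower}-bound constant), i.e.\ $\langle v,U_i\rangle\le-\gamma/(c_0 r)$ rather than $-\gamma/(Kr)$; with $c_0=e^{-4}/4$ the hypotheses do keep $\gamma/c_0=4e^4\gamma$ inside the interval where the lower bound of Lemma \ref{density lower bound bis} holds, so this only changes universal constants. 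Similarly, if your cap height $\tau\asymp\gamma/r$ falls slightly below $\sqrt{2/d}$, the cap measure is bounded below by an absolute constant, and since $r/(\gamma\sqrt{d})\le 8e^4/\sqrt{2}$ in that regime the displayed lower bound on $p$ persists after adjusting constants; so neither issue is a genuine gap.
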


\begin{proof}
Without loss of generality, we may assume that $m=0$, that is, $\ol{d}(0)=1/2$.

The outline of the proof is as follows. First, we show that for a
fixed $x\in \R^d$ of norm $r$,
we have $\ol{D}_k(x)\leq \frac{1}{2}-2\gamma$ with high
probability.

Then we use an $\epsilon$-net argument to extend the control to the
sphere $r\cdot S^{d-1}$. To this end, we need to establish certain
regularity of the function $x\mapsto \ol{D}_k(x)$. 
We then use a monotonicity argument to extend the control to all points outside of the ball of radius $r$.

Recall that $f$ denotes the density of the measure $\mu$ and the random
vector $X$ has distribution $\mu$.
For any direction $u\in S^{d-1}$, we denote by
$\Phi_u(t)=\PROB\{\langle X,u\rangle \le t\}$ the cumulative
distribution function of the projection of $X$ in direction $u$.

Fix $x\in r\cdot S^{d-1}$. Since $\ol{D}_k(x)=\min_{i=1,\dots, k} \Phi_{U_i}(\langle x,U_i \rangle )$ ,
\begin{equation}
\label{proba depth product}
  \PP\left\{ \ol{D}_k(x) \geq \frac{1}{2}-2\gamma \right\}
 = 
\PP\left\{ \Phi_{U}(\langle x,U \rangle) \geq \frac{1}{2}-2\gamma   \right\}^k~.
\end{equation}
Next we bound the probability on the right-hand side. 
Since $\ol{d}(0)=1/2$, for all $u\in S^{d-1}$, $\Phi_u(0) =
1/2$. Clearly, the function $t\mapsto \Phi_u(t)$ is
non-decreasing, as it is a cumulative distribution function. Since
projections of an isotropic log-concave measure are also log-concave
and isotropic (see \cite[Section 3]{saumard2014log} and
\cite{Pre73}). Lemma \ref{density lower bound bis} in the Appendix
implies that for all $t \in [-e^{-4}/3, e^{-4}/3]$,
\[
  \Phi_{u}'(t)\geq e^{-4}/4~,
\]  
and therefore, for all such $t$, we have
\[
  \left\lvert \Phi_u(t)-\frac{1}{2} \right\rvert \ \geq \
  \frac{e^{-4}}{4}|t|~.
\]  
Since  $\left\| x\right\| = r \le e^{-4}/3$, we have $\lvert\langle
x,U_i\rangle\rvert \leq e^{-4}/3$ and hence
\begin{eqnarray*}
  \PP\left\{ \Phi_{U}(\langle x,U \rangle \geq
  \frac{1}{2}-2\gamma \right\}
& \leq & \PP\left\{
    \frac{1}{4e^4}\langle x,U \rangle \geq -2\gamma
  \right\}  \\ 
& = &
1- \PP\left\{ \langle \frac{1}{r} x,U \rangle \geq
   \frac{8e^4\gamma}{r} \   \right\}~.
\end{eqnarray*}
Since $\|\frac{1}{r} x\| =1$, the probability on the right-hand side
corresponds to the (normalized) measure of a spherical cap of height
$h=8e^4\gamma/r$.
Thus, we may further bound the expression on the right-hand side
by applying a lower bound for the measure of a spherical cap. 
In \cite[Lemma 2.1b]{BrGrKaKlLoSi01}, such a lower bound is provided for $ \sqrt{2/d}\leq h \leq 1$, which is
guaranteed by our
condition on $r$. We obtain
\[
  \PP\left\{ \Phi_{U}(\langle 	x,U \rangle \geq \frac{1}{2}
    - 2\gamma  \right\} \leq 1 -
  \frac{1}{6h\sqrt{d}}(1-h^2)^{\frac{d-1}{2}}   ~.
\]  
Hence, by \eqref{proba depth product} we have that for any $x$ with
$\|x\|=r \in [8\sqrt{2}e^4\gamma, e^{-4}/3]$,
\begin{eqnarray}
\label{local control}
  \lefteqn{
  \PP\left\{ \ol{D}_k(x)\geq \frac{1}{2} - 2\gamma \right\}
   \leq  \left(  1 -
           \frac{1}{6h\sqrt{d}}(1-h^2)^{\frac{d-1}{2}}\right)^k }
\nonumber \\
    & \le & \left(  1 -
            \frac{1}{6h\sqrt{d}} e^{-h^2(d-1)}\right)^k
\quad \text{(since $1-x \ge e^{-2x}$ for $x\in (0,1/2)$)}
\nonumber \\
    & \le & \exp\left(-
            \frac{k}{6h\sqrt{d}} e^{-h^2(d-1)}\right)
\quad \text{(since $1-x \le e^{-x}$ for $x\ge 0$).}
\end{eqnarray}
It remains to extend this inequality for a fixed $x$ to a uniform
control over all $\|x\|\ge r$. To this end, we need to establish regularity of the
function $x\mapsto \ol{D}_k(x)$.

Since $\left\|u\right\|=1$, the mapping
$x\mapsto\langle x,u\rangle$ is $1$-Lipschitz. $\Phi_u$ is the
cumulative distribution function of an isotropic, one-dimensional,
log-concave measure, and therefore its derivative is a log-concave
density with variance $1$. As stipulated
in Lemma \ref{density upper bound} in the Appendix, such a density is upper bounded by
$e^{7.1}$. Hence, for any $u\in S^{d-1}$,
$x\mapsto \Phi_u(\langle x, u\rangle)$ is $e^{7.1}$-Lipschitz. Furthermore,
since the minimum of Lipschitz functions is Lipschitz,
$x\mapsto \ol{D}_k(x)$ is also $e^{7.1}$-Lipschitz.

For $\epsilon>0$, an $\epsilon$-net of the sphere $r\cdot S^{d-1}$ is
a subset $N$ of $r\cdot S^{d-1}$
of minimal size such that for all $x\in r\cdot S^{d-1}$ there exists $y \in
N$ with $\|x-y\| \le \epsilon$. It is well known (see, e.g.,
\cite{Mat02}) that for all $\epsilon>0$, $r\cdot S^{d-1}$ has an
$\epsilon$-net $N_{\epsilon}$ of size at most
$|N_{\epsilon}|\le \left(\frac{2r}{\epsilon}+1\right)^d$.
Using the fact that 
$\ol{D}_k(x)$ is $e^{7.1}$-Lipschitz, by taking
$ \epsilon=e^{-7.1}\gamma$,  using \eqref{local control} and the union
bound, we have 

\begin{equation}
  \label{eq:epsilon net result}
 \PP\left\{ \sup_{x\in \R^d: \left\| x\right\| = r} \ol{D}_k(x)  \geq 
   \frac{1}{2}- \gamma \right\}  \leq 
 \left(\frac{2re^{7.1}}{\gamma}+1\right)^d \exp\left(-
            \frac{k}{6h\sqrt{d}} e^{-h^2(d-1)}\right)~.
\end{equation}
It remains to extend the inequality to include all points outside
$r\cdot S^{d-1}$. To this end, it suffices to show that for any $a\geq1$, 
$$\ol{D}_k(a x)\leq \ol{D}_k(x)~.$$
 
To see this, note that the deepest point $0$ has depth $1/2$, so every
closed half-space with $0$ on its boundary has measure
$1/2$. Hence, $\mu(H(x,u))<1/2$ if and
only if $0\notin H(x,u)$, which is equivalent to
$ \langle x,u \rangle < 0 $. On the event

$$\left\{\sup_{x\in\R^d: \left\| x\right\| = r} \ \ol{D}_k(x) <
  \frac{1}{2}-\gamma\right\}~,$$
for every $x\in r\cdot S^{d-1}$ there
exists an $i\in[k]$ such that $\mu(H(x,U_i))<1/2$. This implies that for
such an $i$, $\langle x,U_{i}\rangle < 0$, so for any $a\geq 1$, we have
$\langle a x,U_{i}\rangle \leq \langle x,U_{i} \rangle$. Since
$ \mu(H(x,U_{i}))=\Phi_{U_{i}}(\langle x,U_{i} \rangle)$ and that
$ \Phi_{U_{i}}$ is non decreasing, we have
\[
  \mu(H(a x,U_{i}))\leq \mu(H(x,U_{i}))~, 
\]
leading to $\ol{D}_k(a x) \leq \ol{D}_k(x)$ as desired. This extends
\eqref{eq:epsilon net result} to the inequality
\[
 \PP\left\{ \sup_{x\in \R^d: \left\| x\right\| \ge r} \ol{D}_k(x)  \geq 
   \frac{1}{2}- \gamma \right\}  \leq 
 \left(\frac{2re^{7.1}}{\gamma}+1\right)^d \exp\left(-
            \frac{k}{6h\sqrt{d}} e^{-h^2(d-1)}\right)~.
        \]
Recalling that $h=8e^4\gamma/r$ and that $r$ is bounded, this implies the announced statement.
\end{proof}

\section{Appendix}

In this section, we compile several properties of one-dimensional,
isotropic, log-concave densities. For a survey on log-concave
densities, see \cite{Sam18}. Before stating our results, we briefly explain why we only prove results for continuous log-concave densities and why the results directly extend to non-continuous log-concave probability densities. If a function $g$ is convex on $\R$, then there exists $a,b\in \R\cup \{-\infty,+\infty\} $ such that $g$ is continuous on $(a,b)$ and $g=+\infty$ on $(-\infty,a)\cup(b,+\infty)$. But, for any $\epsilon>0$, there exists a continuous convex function $\bar{g}$, which coincides with $g$ on $(a+\epsilon,b-\epsilon)$, and that is such that  the measure of density $e^{-\bar{g}(t)}$ has total mass and variance in $(1-\epsilon,1+\epsilon)$. Using this fact alone is enough to directly adapt the following proofs to any log-concave isotropic probability density.

\subsection{Lower bounds for log-concave densities}

\begin{lemma}
\label{density lower bound}
Let $f(t) = e^{-g(t)}$ be a log-concave probability density on $\R$
having variance $1$ and let $m$ denote its (unique) median. Then 
$$ e^{-g(m)}\geq \frac{e^{-4}}{2}~.$$
\end{lemma}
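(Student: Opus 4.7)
The plan is to lower bound $f(m)$ by exploiting the convexity of $g = -\log f$ together with a two-sided mass estimate around $m$.

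First, by translation invariance I assume without loss of generality that $\int t\,f(t)\,dt = 0$, so that $\int t^{2} f(t)\,dt = 1$. The classical bound $|m - \mu|\le \sigma$---which follows because the median minimises $\E|X-a|$ combined with Jensen's inequality $\E|X-\mu|\le\sigma$---gives $|m|\le 1$. By Chebyshev's inequality, $\PP\{|X|\ge 2\}\le 1/4$, and since $|m|\le 1$ we have $[-2,2]\subseteq[m-3,m+3]$. In particular $F(m-3)\le F(-2)\le 1/4$ and $1 - F(m+3)\le 1/4$. Consequently each of the intervals $[m-3,m]$ and $[m,m+3]$ carries mass at least $1/2 - 1/4 = 1/4$.

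On each of these intervals (of length $3$), the average value of $f$ is therefore at least $1/12$. Since $f$ is continuous on the interior of its (convex) support, there must exist points $t_{-}\in[m-3,m]$ and $t_{+}\in[m,m+3]$ with $f(t_{\pm})\ge 1/12$, equivalently $g(t_{\pm})\le \log 12$. Writing $m$ as a convex combination $m = \lambda\, t_{-} + (1-\lambda)\, t_{+}$ with $\lambda\in[0,1]$, the convexity of $g$ yields
\[
g(m) \ \le\ \lambda\, g(t_{-}) + (1-\lambda)\, g(t_{+}) \ \le\ \log 12~,
\]
and hence $f(m) \ge 1/12 > e^{-4}/2$, as required.

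The main obstacle is obtaining the mass estimate on \emph{each} side of $m$ separately, rather than only on a single interval straddling $m$. Because a log-concave density can place its mode strictly on one side of the median, a merely one-sided mass bound would leave open the possibility that $f$ decays to zero very rapidly in the opposite direction, giving no useful lower bound at $m$. The combination of Chebyshev's inequality with the median-versus-mean estimate $|m|\le 1$ is precisely what symmetrises the mass bound around $m$, after which the convexity interpolation for $g$ closes the argument. Note that this strategy only uses $\int t^{2}f\ge 1$ and the log-concavity (via convexity of $g$ and continuity of $f$ on the interior of its support), so it adapts immediately to the non-continuous log-concave setting discussed at the beginning of the Appendix.
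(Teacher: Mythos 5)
Your proof is correct, and it takes a genuinely different route from the paper's. The paper argues by contradiction: it fixes a scale $L$ with $g(L)=2g(m)$, bounds the total mass and the truncated second moment using convexity of $g$, and pins down $g(m)\le 4.5$ via the monotone function $t\mapsto e^{-t}\left(1+\tfrac{1}{t}\right)^3$. You instead combine three elementary facts: the median--mean gap $|m-\mu|\le\sigma$ (which the paper also uses), Chebyshev's inequality to place mass at least $1/4$ on each of $[m-3,m]$ and $[m,m+3]$, and an averaging/pigeonhole step producing points $t_-\le m\le t_+$ with $f(t_\pm)\ge 1/12$; convexity of $g$ then interpolates to $f(m)\ge 1/12$, which is in fact stronger than the claimed bound $e^{-4}/2\approx 0.009$. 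Your argument is shorter, uses log-concavity only in the final interpolation, and does not actually need continuity of $f$: if $f<1/12$ throughout an interval of length $3$, its mass there would be below $1/4$, so the appeal to continuity is superfluous and the extension to general (possibly discontinuous) log-concave densities is immediate. One small slip in your closing remark: the strategy does not rely only on $\int t^2 f\,dt\ge 1$; Chebyshev and the median--mean estimate use the variance in the opposite direction (an upper bound on the second moment about the mean). Since the lemma assumes variance exactly $1$, this does not affect the validity of the proof, but the remark as phrased overstates its robustness.
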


\begin{proof}
Without loss of generality, we may assume that $m=0$ and $g$ takes
its minimum on $\R^-$. Following the remark at the top of the appendix, we also assume that $g$ is continuous.

  % The intuition is that if the density is too low in $0$, it has to be almost constant on a wide set around $0$ to have total mass $1$. But then the variance will be greater than 1. 
If $g(0)\leq 0$ the result is obvious, so suppose $g(0)>0$.
Since $g$ is convex, by taking its minimum on $\R^-$ it is non
decreasing on $\R^+$.
By continuity, there exists $L>0$ such that
$g(L)=2g(0)$.

From the convexity of $g$ we have that $g'(L)\geq \frac{g(0)}{L}$, and
therefore  for all $t \geq L$, 
\begin{equation}\label{ineq 5}
g(t)\geq g(L)+\frac{g(0)}{L}(t-L)\geq \frac{g(0)}{L}t~.
\end{equation}
Since $\int_{\R} f(x) dx =1$ and $0$ is the median,
\[
  \frac{1}{2}=\int_0^L e^{-g(t)}dt+\int_L^{\infty} e^{-g(t)}dt~.
\]
Using \eqref{ineq 5},
\[
  \int_L^{\infty} e^{-g(t)}dt \leq \int_L^{\infty}
  e^{-g(0)t/L}dt=\frac{L}{g(0)}e^{-g(0)}~.
\]
Because $g$ is non-decreasing on $\R^+$, 
\[
  \int_0^L e^{-g(t)}dt \leq e^{-g(0)}L~,
\]  
leading to
\begin{equation}\label{ineq 6}
\frac{1}{2} \leq \frac{L}{g(0)}e^{-g(0)}+e^{-g(0)}L=e^{-g(0)}L\left(1+\frac{1}{g(0)}\right)~.
\end{equation}
Now we use the fact that the variance equals $1$, that is,
\[
  1=\int_{-\infty}^{+\infty}t^2e^{-g(t)}dt -
  \left(\int_{-\infty}^{\infty}t e^{-g(t)}dt\right)^2~.
\]
Since the difference between the expectation and the median of any
distribution is at most the standard deviation, we have
$|\int_{-\infty}^{\infty}t e^{-g(t)}dt|\le 1$.
Moreover,  since $g$ is increasing on $\R^+$, for all $t \in [0,L]$ we
have $g(t)\leq 2g(0)$, and therefore $1 \geq
\int_0^{\infty}t^2e^{-g(t)}dt -1$
implies
\begin{equation}\label{ineq 7}
2 \geq \int_0^L t^2 e^{-2g(0)}dt = \frac{L^3}{3}e^{-2g(0)}~.
\end{equation}
From \eqref{ineq 6} we have
\[
  e^{-2g(0)}L^3  e^{-g(0)}\left(1+\frac{1}{g(0)}\right)^3 \geq
  \frac{1}{8}~.
\]
Hence, by plugging the inequality into \eqref{ineq 7}, we get
\begin{equation}\label{ineq 8}
e^{-g(0)}\left(1+\frac{1}{g(0)}\right)^3 \geq \frac{1}{48}~.
\end{equation}
Note that the function $h:t \mapsto e^{-t}\left(1+\frac{1}{t}\right)^3$
is non increasing on $\R^+ $.
To conclude, observe that
\begin{itemize}
\item if $g(0)\leq 4.5$, then $e^{-g(0)}\geq \frac{e^{-4}}{2}$.
\item if $g(0)>4.5$, then
$$ h(g(0)) < \frac{1}{48},$$
contradicting \eqref{ineq 8}.
\end{itemize}
\end{proof}

The next result shows that an isotropic log-concave density
is in fact bounded from below by a universal constant on an interval
around the median.

\begin{lemma}
\label{density lower bound bis}
Let $f(t) = e^{-g(t)}$ be a log-concave probability density on $\R$
having variance $1$ and median $m=0$. Then 
for all $t\in \left[-\frac{1}{3e^4},\frac{1}{3e^4}\right]$,
\[
  f(t) \geq \frac{1}{4e^4}~.
\]  
\end{lemma}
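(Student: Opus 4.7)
The plan is to reduce the problem via log-concavity to the single endpoint bound $f(\tau) \ge e^{-4}/4$, where $\tau := 1/(3e^4)$, and then derive that bound by adapting the argument of Lemma~\ref{density lower bound}.

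Lemma~\ref{density lower bound} gives $f(0) \ge e^{-4}/2$. For any $t \in [0,\tau]$, log-concavity yields
\[
f(t) \ge f(0)^{1 - t/\tau}\, f(\tau)^{t/\tau} \ge \min\{f(0), f(\tau)\},
\]
and analogously for $t \in [-\tau,0]$ using $f(-\tau)$ in place of $f(\tau)$. Hence it suffices to prove $f(\pm\tau) \ge e^{-4}/4$. Applying the argument to the reflected density $s \mapsto f(-s)$, which is also log-concave with variance $1$ and median $0$, we may focus on showing $f(\tau) \ge e^{-4}/4$.

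Assume for contradiction that $f(\tau) < e^{-4}/4 = f(0)/2$, so $\Delta := g(\tau) - g(0) > \log 2$. By convexity of $g$, its slope on $[\tau,\infty)$ is at least $\Delta/\tau > 3e^4 \log 2$, and thus $f(s) \le f(\tau) e^{-(\Delta/\tau)(s-\tau)}$ for $s \ge \tau$. Integrating,
\[
\int_\tau^\infty f(s)\,ds \le \frac{f(\tau)\,\tau}{\Delta} < \frac{1}{12\, e^8 \log 2},
\]
a negligible quantity. Combined with the median identity $\int_0^\infty f = 1/2$, this forces $\int_0^\tau f > 1/2 - o(1)$; that is, nearly half of the total mass would lie on the tiny interval $[0,\tau]$ of length only $1/(3e^4)$.

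The main obstacle is to turn this concentration into a contradiction via the variance constraint $\int s^2 f \le 2$. The cleanest route is to mimic Lemma~\ref{density lower bound} directly at the point $\tau$: pick $L > \tau$ with $g(L) = 2g(\tau)$, bound $\int_\tau^\infty f \le (L - \tau)\, e^{-g(\tau)}(1 + 1/g(\tau))$ by convexity, and combine with the variance inequality $(L^3 - \tau^3)\, e^{-2g(\tau)}/3 \le 2$ (hence $L \le (\tau^3 + 6 e^{2g(\tau)})^{1/3}$). Substituting yields an explicit scalar inequality in $g(\tau)$ which, together with a lower bound $\int_\tau^\infty f \ge 1/2 - \tau \sup f$ derived via Lemma~\ref{density upper bound}, forces $g(\tau) \le 4 + 2\log 2$, contradicting the assumption $f(\tau) < e^{-4}/4$. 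The delicate point is quantitative: the pointwise density bound must be tight enough that $\tau \sup f$ is genuinely smaller than $1/2$, so that the resulting lower bound on $\int_\tau^\infty f$ is of constant order and matches the decaying upper bound.
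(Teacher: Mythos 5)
Your first half coincides with the paper's argument: reduce to the endpoint $\tau=1/(3e^4)$ (the paper does this via unimodality, you via geometric interpolation, which is equivalent), assume $f(\tau)<e^{-4}/4$, use Lemma \ref{density lower bound} to get $g(\tau)-g(0)>\log 2$, and conclude by convexity that $\int_\tau^\infty f \le \frac{1}{4e^4}\cdot\frac{\tau}{\log 2}=\frac{1}{12e^8\log 2}$. The paper then finishes in one line: since $0$ is the median, $\frac12=\int_0^\tau f+\int_\tau^\infty f\le \tau\sup f+\frac{1}{12e^8\log 2}$, and a pointwise bound $\sup f\le e^4$ makes the right-hand side at most about $1/3$, a contradiction. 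Your proposal does not complete this last step, and the patch you suggest does not close the gap. Your only lower bound on $\int_\tau^\infty f$ is $1/2-\tau\sup f$, which is vacuous (negative) with the bound $\sup f\le e^{7.1}$ actually provided by Lemma \ref{density upper bound}, since $\tau e^{7.1}\approx 7.4$; you flag this yourself as ``the delicate point,'' but flagging it is not resolving it. Moreover, even granting a bound like $\sup f\le e^4$ (so $\tau\sup f\le 1/3$), your variance detour does not yield the stated conclusion: with $L\le(\tau^3+6e^{2g(\tau)})^{1/3}$ the upper bound $(L-\tau)e^{-g(\tau)}(1+1/g(\tau))$ is roughly $6^{1/3}e^{-g(\tau)/3}(1+1/g(\tau))$, and setting this against $1/2-1/3=1/6$ only forces $g(\tau)\lesssim 7.7$, which is perfectly compatible with $g(\tau)>4+2\log 2\approx 5.4$; so the claimed contradiction ``forces $g(\tau)\le 4+2\log 2$'' is not established. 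And if you do have $\tau\sup f\le 1/3$, the variance argument is superfluous anyway, because the median identity already gives the contradiction directly.

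So the genuinely missing ingredient is a sufficiently strong pointwise upper bound on the density: any $\sup f\le K$ with $K\tau$ comfortably below $1/2$ (i.e.\ $K$ well below $3e^4/2$) finishes the proof by the median-mass count, and no juggling with the second moment substitutes for it. Such a bound is true and classical in one dimension (a log-concave density with variance $1$ satisfies $\sup f\le 1$), and this is exactly the role the bound $\sup f\le e^4$ plays at the end of the paper's proof; note in passing that the paper cites Lemma \ref{density upper bound} for the constant $e^4$ while that lemma as stated gives $e^{7.1}$, so a sharper pointwise bound is needed there too -- but your proposal, as written, neither supplies such a bound nor reaches a contradiction without it.
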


\begin{proof}
Denote $\alpha=1/(3e^4)$ and suppose that there exists $t\in\left[
  -\alpha,\alpha\right]$
such that $f(t)< 1/(4e^4)$.
Since log-concave densities are  unimodal, on $\left[
  -\alpha,\alpha\right]$
the density $f$ reaches its minimum on an endpoint of the
interval.
Without any loss of generality, assume that
$$ e^{-g(\alpha)} < \frac{1}{4e^4}~,$$
that is,
$$g(\alpha) > 4+\log(4)~.$$
By the convexity of $g$,
for all $t \geq \alpha$,
\[
  g(t)\geq  \frac{g(\alpha)-g(0)}{\alpha}(t-\alpha)+g(\alpha)~.
\]
Since by Lemma \ref{density lower bound},  $g(0)\leq4+\log(2)$, we get  that
for all $t \geq \alpha$
\[
  g(t)\geq \frac{\log(2)}{\alpha}(t-\alpha)+\log(4e^4)~.
\]
It follows that
$$
\int_{\alpha}^\infty e^{-g(t)}dt \leq \frac{1}{4e^4}  \cdot \frac{\alpha}{\log(2)}~. $$
We also prove in Lemma \ref{density upper bound} below that $\sup_{t\in\R}e^{-g(t)}\leq e^4$, so
$$ \int_0^\alpha e^{-g(t)}dt \leq \alpha e^4~. $$
Using the fact that $0$ is the median, we get
\[
  1=\frac{1}{2}+\int_{\R^+}e^{-g(t)}dt \leq \frac{1}{2}+ \alpha\left(
    e^4+\frac{1}{4e^4} \frac{1}{\log(2)} \right)~.
\]  
But
$$ \alpha\left( e^4+\frac{1}{4e^4} \frac{1}{\log(2)} \right)<\frac{1}{2}~, $$
which is a contradiction. This concludes the proof.
\end{proof}

\subsection{Upper bounds for log-concave densities}

\begin{lemma}
  \label{density upper bound}
Let $f(t) = e^{-g(t)}$ be a log-concave probability density on $\R$
having variance $1$. Then
$$ \sup_{t\in\R}e^{-g(t)}\leq e^{7.1}. $$
\end{lemma}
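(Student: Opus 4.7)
The plan is to translate the density so that its mode lies at the origin (this does not affect the variance), so that $M := f(0) = \sup_t f(t) = e^{-g(0)}$ with $g(0) = \min g$. Assuming $M > 1$ (else the bound is immediate), we will exploit the convexity of $g$ together with $\int f = 1$ and the variance constraint to argue that $M$ cannot be too large. The moral reason this works: if $M$ is huge, convexity of $g$ forces $f$ to be sharply peaked at $0$ and decay rapidly away from it, which makes the second moment $\int t^2 f(t)\,dt$ much smaller than $1$, contradicting $\mathrm{Var}(f) = 1 \le \int t^2 f$.

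The key quantitative tool is a length scale: let $L > 0$ be the unique point with $g(L) = g(0) + 1$, so $f(L) = M/e$, and define $L' > 0$ analogously on the negative side. Convexity of $g$ then yields two bounds. First, on $[0, L]$ the chord from $(0, g(0))$ to $(L, g(0) + 1)$ lies above the graph, giving $g(t) \le g(0) + t/L$ and therefore $f(t) \ge M e^{-t/L}$. Second, the non-decreasing slope property of convex functions gives $g(t) \ge g(0) + t/L$ on $[L, \infty)$, and therefore $f(t) \le M e^{-t/L}$. Integrating the lower bound, $\int_0^L f(t)\,dt \ge M L(1 - 1/e)$, and since this is at most $1$ we deduce $L \le e/((e-1)M)$; by symmetry $L' \le e/((e-1)M)$. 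In particular both $L$ and $L'$ are $O(1/M)$.

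Next, one estimates $\int_\R t^2 f(t)\,dt$ using these bounds. On $[0, L]$, $f \le M$ gives $\int_0^L t^2 f \le M L^3/3$; on $[L, \infty)$, integrating the exponential bound (via the substitution $u = t/L$ and the identity $\int_1^\infty u^2 e^{-u}\,du = 5/e$) gives $\int_L^\infty t^2 f \le 5 M L^3/e$. Adding the analogous estimates on the negative half-line and plugging in $L, L' = O(1/M)$, one obtains $\int_\R t^2 f(t)\,dt \le C/M^2$ for an explicit absolute constant $C$. Since $\mathrm{Var}(f) \le \int t^2 f(t)\,dt$ (the second moment is at least the variance), the constraint $\mathrm{Var}(f) = 1$ forces $M \le \sqrt{C}$; carrying out the arithmetic (with substantial slack in the stated inequality) ensures $\sqrt{C} \le e^{7.1}$.

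The main obstacle is bookkeeping: one has to track constants through the chord/tangent estimates and handle the possibly asymmetric behavior on the two sides of the mode (different $L$ and $L'$). A subtle point is that after the translation the mean is generally nonzero, but the inequality $\int t^2 f \ge \mathrm{Var}(f)$ remains valid, so the second-moment bound suffices. The stated constant $e^{7.1}$ is very loose, and a more careful choice of the threshold in the definition of $L$ would yield a substantially smaller bound; this slack is what allows the computation to avoid any delicate optimization.
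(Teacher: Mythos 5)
Your proof is correct and follows essentially the same strategy as the paper's: place the mode at the origin, use convexity of $g$ to force exponential decay of $f$ beyond a length scale $L$ defined by a level of $g$, bound $L\lesssim 1/M$ from the total-mass constraint, and contradict the unit-variance hypothesis through the second moment about the mode (which dominates the variance). Your two-sided, direct bookkeeping in fact gives the sharper conclusion $M^2\le C$ with $C\approx 17$, i.e.\ $M\lesssim 4.2$, far below $e^{7.1}$ (the paper instead reduces to the half-line carrying at least half the second moment and closes by a monotonicity/contradiction argument at the value $7.1$); the only detail to add is the existence of $L$ and $L'$ when the support is bounded—if the level $g(0)+1$ is never attained on one side, then $f\ge M/e$ up to the endpoint of the support there, so that endpoint is at most $e/M$ and the same estimates apply, a degenerate case the paper dispatches with its continuity remark at the start of the Appendix.
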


\begin{proof}
  Without loss of generality, we may assume that
  $g(0)=\inf_{t\in\R}g(t)$ and
  $\int_0^{\infty}t^2e^{-g(t)}dt \geq 1/2$. Following the remark at the top of the appendix, we also assume that $g$ is continuous.
  
  First note that if $g(0)\geq0$, then there's nothing to prove,
  so suppose that $g(0)<0$.
By the intermediate value theorem there exists $L>0$ such that $g(L/2)=g(0)/2$.
Since $g$ is convex and $\int_{\R} \exp(-g(t))dt=1$, we have
\begin{equation}\label{ineq total mass}
\frac{L}{2}
 e^{-g(0)/2}\leq 1~.
\end{equation}
Since $g$ has a non-decreasing derivative, for all $t \geq L/2$,
\[
  g'(t)\geq -\frac{g(0)}{2}\cdot \frac{2}{L}=-\frac{g(0)}{L}~.
\]
Then for all $t\geq L/2$,   $g(t)\geq g(L/2)-\frac{g(0)}{L}(t-L/2) \geq  g(0)-\frac{g(0)}{L}(t-L)$, which implies
\[
\int_{L/2}^{\infty}t^2e^{-g(t)}dt  \leq  e^{-2g(0)} \int_{L/2}^{\infty}t^2e^{\frac{g(0)}{L} t}dt~.
\]
Since for $c>0$
$$ \int_{L/2}^{\infty}t^2e^{-ct}dt=\left( \frac{L^2}{4c}+\frac{L}{c^2}+\frac{2}{c^3}\right)  e^{-cL/2}, $$
taking $c=-g(0)/L$, which is positive,
\begin{equation}\label{ineq 1}
\int_{L/2}^{\infty}t^2e^{-g(t)}dt \ \leq\left( \frac{-L^3}{4g(0)}+\frac{L^3}{g(0)^2}-\frac{2L^3}{g(0)^3}\right) e^{-3g(0)/2}~.
\end{equation}
Next we establish a lower bound for
$\int_{L/2}^{\infty}t^2e^{-g(t)}dt$.
The fact that the second moment on $\R^+$ is greater than $1/2$ implies 
$$ \int_{L/2}^{\infty}t^2e^{-g(t)}dt \geq  \frac{1}{2}  - \int_{0}^{L/2}t^2e^{-g(t)}dt~.$$
It is immediate from the fact that $g$ reaches its minimum in $0$ that
$$  \int_{0}^{L/2}t^2e^{-g(t)}dt \leq \frac{L^3}{24}e^{-g(0)}~, $$
leading to
\begin{equation}\label{ineq 2}
\int_{L/2}^{\infty}t^2e^{-g(t)}dt  \geq \frac{1}{2}- \frac{L^3}{24}e^{-g(0)}~.
\end{equation}
Comparing \eqref{ineq 1} and \eqref{ineq 2}, we obtain
\begin{equation*}
\frac{1}{2}- \frac{L^3}{24}e^{-g(0)}\leq L^3 \left( \frac{-1}{4g(0)}+\frac{1}{g(0)^2}-\frac{2}{g(0)^3}\right) e^{-3g(0)/2}~,
\end{equation*}
leading to
\begin{equation}\label{ineq 3}
\frac{1}{2}\leq L^3 \left(\frac{e^{g(0)/2}}{24} - \frac{1}{4g(0)}+\frac{1}{g(0)^2}-\frac{2}{g(0)^3}\right) e^{-3g(0)/2}~. 
\end{equation}
From \eqref{ineq total mass} we have $L^3e^{-3g(0)/2}\leq 8$, which, plugged into \eqref{ineq 3} yields
\begin{equation*}
1 \leq 16 \left(\frac{e^{g(0)/2}}{24} - \frac{1}{4g(0)}+\frac{1}{g(0)^2}-\frac{2}{g(0)^3}\right)~.
\end{equation*}
And so
\begin{equation}\label{ineq 4}
1\leq \frac{2}{3}e^{g(0)/2} - \frac{4}{g(0)}+\frac{16}{g(0)^2}-\frac{32}{g(0)^3}~.
\end{equation}
The function $h: t\mapsto
\frac{2}{3}e^{t/2} - \frac{4}{t}+\frac{16}{t^2}-\frac{32}{t^3}$ is
non-decreasing on $\R^-$. To conclude the proof, note that
if $g(0) \geq -7.1$, then $e^{-g(0)}\leq e^{7.1}$. Otherwise,
if $g(0)< -7.1$, then, since $h$ is non-decreasing, 
$$h(g(0))\leq h(-7.1)\leq0.99<1~,$$
which contradicts \eqref{ineq 4}.
\end{proof}

It is known (see, e.g., \cite{10.1214/09-EJS505}) that for any
log-concave density $f$ on $\R^d$, there exist positive constants
$\alpha,\beta$ such that $f(x)\leq e^{-\alpha \|x\|+\beta}$ for all
$x\in \R^d$. The next lemma shows that for isotropic log-concave densities on
$\R$ with
median at $0$, one may choose $\alpha$ and $\beta$ independently of $f$.

\begin{lemma}
\label{exponentialupperbound}
Let $f(t) = e^{-g(t)}$ be a log-concave probability density on $\R$
having variance $1$ and median $m=0$.
Then there exist universal constants $\alpha,\beta>0$ such that
for all $t \in \R$,
\[
  f(t) \leq \alpha e^{-\beta|t|}~.
\]
\end{lemma}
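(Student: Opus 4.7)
The plan is to combine the two pointwise bounds already established in the appendix---$f(0) \ge e^{-4}/2$ from Lemma \ref{density lower bound} and $\sup_t f(t) \le e^{7.1}$ from Lemma \ref{density upper bound}---with the convexity of $g = -\log f$ to produce exponential tail decay on each side of the median. As permitted by the remark at the beginning of the appendix, I assume throughout that $f$ is continuous on $\R$.

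First I would identify a universally bounded point $t^{\ast} > 0$ at which $f$ has already dropped by a factor of at least two. Define
\[
  t^{\ast} \ =\ \inf\left\{ t>0 \, : \, f(t) \leq e^{-4}/4 \right\}~.
\]
Continuity together with $f(0) \ge e^{-4}/2 > e^{-4}/4$ gives $t^{\ast}>0$ and $f(t) > e^{-4}/4$ on $[0, t^{\ast})$. Since the median is $0$, we have $\int_0^\infty f(t)\,dt = 1/2$, and therefore
\[
  (e^{-4}/4)\, t^{\ast} \ \le\ \int_0^{t^{\ast}} f(t)\,dt \ \le\ \tfrac{1}{2}~,
\]
which forces $t^{\ast} \le 2e^4 < \infty$. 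By continuity, $f(t^{\ast}) = e^{-4}/4 \le f(0)/2$, so $g(t^{\ast}) - g(0) \ge \log 2$. This is the core step of the argument: pinning down a universal scale at which the density has measurably descended from its peak.

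Next I would feed this into the three-slope inequality for the convex function $g$: for $t > t^{\ast}$,
\[
  \frac{g(t) - g(t^{\ast})}{t - t^{\ast}} \ \ge\ \frac{g(t^{\ast}) - g(0)}{t^{\ast}} \ \ge\ \frac{\log 2}{2e^4} \ \defeq\ \beta~.
\]
Hence $f(t) \le f(t^{\ast})\, e^{\beta t^{\ast}}\, e^{-\beta t} \le (e^{-4}/4)\cdot 2 \cdot e^{-\beta t}$ for $t \ge t^{\ast}$, using $\beta t^{\ast} \le \log 2$. On the short interval $[0, t^{\ast}]$ the universal cap $f \le e^{7.1}$ together with $e^{-\beta t} \ge e^{-\beta t^{\ast}} \ge 1/2$ gives $f(t) \le 2 e^{7.1} e^{-\beta t}$. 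With $\alpha = 2 e^{7.1}$, both ranges give $f(t) \le \alpha e^{-\beta t}$ for all $t \ge 0$.

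Finally, the reflected density $\widetilde{f}(t) = f(-t)$ is again an isotropic log-concave density with median $0$, so the identical argument yields $f(t) \le \alpha e^{\beta t}$ for $t \le 0$, and combining produces $f(t) \le \alpha e^{-\beta |t|}$ for all $t\in\R$ with the universal choices $\alpha = 2 e^{7.1}$ and $\beta = \log 2/(2 e^4)$. The main subtlety (and the only point that needs care) is making sure the infimum defining $t^{\ast}$ is finite and attained, which is precisely what the continuity afforded by the opening remark of the appendix delivers; everything else is bookkeeping of universal constants.
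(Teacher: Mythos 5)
Your proof is correct and follows essentially the same route as the paper's: both use the lower bound $f(0)\ge e^{-4}/2$, the median condition to find a point within distance $2e^4$ of the origin where $f\le e^{-4}/4$, convexity of $g$ to get decay at rate $\log 2/(2e^4)$ beyond that point, and the cap $\sup f\le e^{7.1}$ to handle the bounded interval, with the same constants. The only (cosmetic) difference is that you locate the drop via the first-crossing point $t^{\ast}$, whereas the paper evaluates the density directly at $t=2e^4$.
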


\begin{proof}
By Lemma \ref{density lower bound} we have $e^{-g(0)}\geq e^{-4}/2$. The log-concavity of the density implies that on any given interval, the minimum is reached at one of the endpoints of the interval. Thus,
$$ \int_0^{2e^4}e^{-g(t)}dt \geq 2e^4  \min\left(e^{-g(2e^4)},e^{-4}/2\right)~. $$
Since $0$ is the median of $f$, $2e^4 \min(e^{-g(2e^4)},e^{-4}/2)\leq 1/2$. Thus,
\begin{equation}\label{eq: exponential upper bound}
e^{-g(2e^4)} \leq \frac{e^{-4}}{4}~.
\end{equation}
A mirror argument proves that $ e^{-g(-2e^4)} \leq \frac{e^{-4}}{4}$.
By Lemma \ref{density lower bound}, $g(0)\leq \log(2)+4$ and
\eqref{eq: exponential upper bound} implies $g(2e^4)\geq
\log(4)+4$. Using the convexity of $g$ yields that for all $t \geq
2e^4$,
\[ g(t)\geq 4+\log(4)+(t-2e^4) \frac{\log(2)}{2e^4}~,
\]
so, using Lemma \ref{density upper bound} which states that $g(0)\geq -7.1$,
for all $t\in \R^+$,
\[  g(t)\geq  -7.1+(t-2e^4) \frac{\log(2)}{2e^4}  ~.
\]
A identical argument on $\R^-$ concludes the proof of the Lemma.
\end{proof}

\subsection{Proof of Lemma \ref{TheEmpGap}}

\begin{proof}
To prove the first inequality, observe that
\begin{eqnarray*}
\sup_{x\in \R^d} |\ol{d}(x) - d_n(x)|
& = &
\sup_{x\in \R^d} \left|\inf_{u\in S^{d-1}}\mu(H(x,u)) 
- \inf_{u\in S^{d-1}}\frac{1}{n}\sum_{i=1}^n \IND_{X_i \in H(x,u)} \right|
\\
& \le &
\sup_{x\in \R^d} \sup_{u\in S^{d-1}}\left|\mu(H(x,u))
- \frac{1}{n}\sum_{i=1}^n \IND_{X_i \in H(x,u)} \right|~.
\end{eqnarray*}
The first inequality of the Lemma follows from the 
Vapnik-Chervonenkis inequality (see, e.g., \cite[Theorem 12.5]{devroye2013probabilistic}) and the fact that the 
{\sc vc} dimension of the class of all halfspaces $H(x,u)$
equals $d+1$.

\noindent The second inequality is proved similarly, combining it with
a simple union bound that gives a better bound when $\log(k) \ll d$.
\end{proof}

\subsection{Proof of Lemma \ref{lem:PhiControl}}

\begin{proof}
Because $\Phi$ is convex, increasing on $\R^-$, and that $\Phi(\R^-)=[0,1/2]$, then $\Phi^{-1}$ is concave on $[0,1/2]$. Thus
$$  \frac{\Phi^{-1}(\epsilon/2)-\Phi^{-1}(\epsilon/4)}{\epsilon/4}  \geq (\Phi^{-1})'(\epsilon/2)~. $$
Using the fact that $(\Phi^{-1})'=1/(\Phi'(\Phi^{-1}))$ and $\Phi'(t)=\frac{1}{\sqrt{2\pi}}e^{-t^2/2}$,
\begin{equation}\label{eq:EpsilonDifference}
\Phi^{-1}(\epsilon/2)-\Phi^{-1}(\epsilon/4)\geq \frac{\epsilon}{4}\sqrt{2\pi}e^{\Phi^{-1}(\epsilon/2)^2/2 }~.
\end{equation}
By Gordon's inequality for the Mills' ratio \cite{Gor41}, for $t\leq0$,
$$ \Phi(t)\geq -\frac{1}{\sqrt{2\pi}}\frac{t}{t^2+1}e^{-t^2/2}~, $$
and therefore
$$ t \geq \Phi^{-1}\left(  -\frac{1}{\sqrt{2\pi}}\frac{t}{t^2+1}e^{-t^2/2} \right)~,$$
leading, for $t<-1$, to
\begin{equation}\label{eq:PhiInverseUpperBound}
t\geq  \Phi^{-1}\left(  -\frac{e^{-t^2/2}}{10t} \right)~.
\end{equation}
Choosing $t_{\epsilon}=-\sqrt{2
  \log(1/\epsilon)}\sqrt{1-\frac{\log\log(1/\epsilon)}{\log(1/\epsilon)}}$
for $\epsilon<e^{-2}$ and noting that
 $$  -\frac{e^{-t_{\epsilon}^2/2}}{10t_{\epsilon}}\geq \epsilon/2, $$
\eqref{eq:PhiInverseUpperBound} implies that
$$-\sqrt{2 \log(1/\epsilon)}\sqrt{1-\frac{\log\log(1/\epsilon)}{\log(1/\epsilon)}}\geq \Phi^{-1}(\epsilon/2)~.$$
Plugging this inequality into \eqref{eq:EpsilonDifference}
$$ \Phi^{-1}(\epsilon/2)-\Phi^{-1}(\epsilon/4)\geq
\frac{\sqrt{2\pi}}{4\log(1/\epsilon)}~, $$
proving Lemma \eqref{lem:PhiControl}.

\end{proof}

\section*{Acknowledgements}

We would like to thank Imre B\'ar\'any, Shahar Mendelson, Arshak Minasyan, Bill Steiger, and Nikita Zhivotovsky for helpful discussions.
We also thank Reihaneh Malekian for her thorough reading of the original manuscript
and for pointing out some inaccuracies.

\bibliographystyle{siamplain}
\bibliography{bibli}

\end{document}